\documentclass[11pt]{article}

\usepackage[margin=1in]{geometry}

\usepackage{graphicx}
\usepackage{mathrsfs}
\usepackage{amsmath}
\usepackage{bm}
\usepackage{amsfonts}
\usepackage{dsfont}
\usepackage{mathtools}
\usepackage{amsthm}
\usepackage{enumitem}
\usepackage{algorithm}
\usepackage{algpseudocode}
\usepackage{tikz}
\usetikzlibrary{arrows.meta,calc}

\numberwithin{equation}{section}
\allowdisplaybreaks

\usepackage[round]{natbib}
\usepackage{xcolor}
\definecolor{linkblue}{RGB}{35,75,145}
\definecolor{citegreen}{RGB}{20,105,75}
\usepackage{hyperref}
\hypersetup{
    colorlinks=true,
    linkcolor=linkblue,
    citecolor=citegreen,
    urlcolor=linkblue,
    pdftitle={Equilibrium Learning in Unbounded Stochastic Games with Gaussian Transitions},
    pdfauthor={Jonathan Anthony Mavroforas and Fred Roosta}
}
\usepackage[capitalise]{cleveref}
\crefname{equation}{}{}
\crefname{figure}{Figure}{Figures}
\creflabelformat{equation}{\textup{(#2#1#3)}}
\crefname{assumption}{Assumption}{Assumptions}

\newcommand{\bb}[1]{\mathbb{#1}}
\newcommand{\linfty}[1]{\left\lVert #1 \right\rVert_{\infty}}

\newtheorem{theorem}{Theorem}[section]
\newtheorem{proposition}[theorem]{Proposition}
\newtheorem{lemma}[theorem]{Lemma}

\theoremstyle{definition}
\newtheorem{definition}{Definition}[section]
\newtheorem{assumption}{Assumption}

\theoremstyle{remark}
\newtheorem{remark}{Remark}[section]

\setlist[enumerate]{leftmargin=*,itemsep=3pt,topsep=3pt}
\setlist[itemize]{leftmargin=*,itemsep=3pt,topsep=3pt}

\title{\textbf{Equilibrium Learning in Unbounded Stochastic Games with Gaussian Transitions}}
\author{
Jonathan Anthony Mavroforas \qquad Fred Roosta\\[0.5em]
\small School of Mathematics and Physics\\
\small The University of Queensland, Brisbane, Australia
}
\date{}

\begin{document}

\maketitle

\begin{abstract}
Learning equilibria from data is difficult when several decision makers interact indefinitely and the system state has no prescribed bounds. We examine discounted general-sum stochastic games with compact action sets and an unknown nonlinear drift observed through Gaussian transition noise. The proposed procedure uses a single uninterrupted trajectory to update Gaussian-process models of the drift and to solve successively refined finite games. We prove a perturbation result showing that stability of equilibrium selection, together with uniform KL accuracy of Gaussian transition models, is sufficient for regional uniform convergence of the induced values and approximate Nash behavior in the original game. The region specifies where the guarantees are evaluated; strategies and dynamics continue to act on the entire Euclidean state space. For GP estimators, simultaneous confidence bounds yield a rate controlled by posterior uncertainty, kernel information gain, and joint-state dimension. Numerical studies illustrate how kernel choice and dimension appear in the predicted convergence behavior.
\end{abstract}

\noindent\textbf{Keywords:} multi-agent reinforcement learning; stochastic games; Gaussian processes; unbounded state spaces; equilibrium convergence.

\section{Introduction}
\label{sec:introduction}

Multi-agent reinforcement learning (MARL) provides a framework for sequential decision making among interacting agents~\cite{albrecht2024multi,gronauer2022multi}. Despite strong empirical progress, rigorous guarantees remain limited for general-sum stochastic games with continuous spaces and unknown nonlinear dynamics. The difficulty is especially pronounced when the state space is unbounded and data arrive along one continuing trajectory: transition estimation, equilibrium computation, and dependence in the collected data must be treated jointly.

Two theoretical directions address complementary parts of this problem. Gaussian-process (GP) reinforcement learning provides calibrated uncertainty quantification for unknown nonlinear dynamics, including nonepisodic single-agent learning~\cite{sukhija2024neorl}. Separately, finite-game approximation establishes approximate Nash equilibria for continuous-state stochastic games with known transitions~\cite{saldi2024existence}. We first derive a KL-to-value convergence result for learned Gaussian transition kernels and then specialize it to coordinate-wise GP estimators, thereby characterizing sufficient selection-stability and statistical conditions for equilibrium convergence under unknown dynamics.

\paragraph{Contributions.}
\begin{enumerate}[leftmargin=*,itemsep=3pt,topsep=3pt]
    \item \textbf{Single-trajectory equilibrium learning.}
    We introduce GP-MARL (Algorithm~\ref{alg:finite_game_rl}), a model-based algorithm with centralized training and decentralized execution for infinite-horizon discounted general-sum games with unbounded continuous states and compact continuous actions. GP-MARL alternates between updating a GP transition model and solving a finite approximation of the learned game, requiring neither episodic resets nor a generative model.

    \item \textbf{Convergence theory for learned Gaussian dynamics.}
    Theorem~\ref{theorem:convergence_rate} establishes that local policy-selection stability and uniform KL convergence of learned Gaussian transition kernels jointly guarantee uniform value convergence over any prescribed compact set of states. Combined with the finite-game approximation in Proposition~\ref{prop:approximation}, this convergence yields an approximate Nash equilibrium for the original continuous game. The compact set restricts only where the guarantee is stated; policies, deviations, Bellman dynamics, and trajectories remain defined on the full unbounded state space.

    \item \textbf{Verifiable GP conditions and rate.}
    Lemma~\ref{lemma:well_calibrated_gp} provides time-uniform calibrated confidence sets for the unknown transition dynamics. The GP specialization of Theorem~\ref{theorem:convergence_rate} then yields a high-probability value-error rate that holds across all learning iterations and depends explicitly on posterior contraction, maximum information gain, and joint state dimension. Together, these results provide concrete sufficient conditions for convergence in the single-trajectory, unknown-dynamics setting.
\end{enumerate}

\section{Related Work}
\label{sec:related_works}

\paragraph{Deep MARL.}
Value-decomposition methods such as VDN, QMIX, and QTRAN~\cite{sunehag2017value,rashid2020monotonic,son2019qtran}, and CTDE methods such as MADDPG, COMA, and MAPPO~\cite{lowe2017multi,foerster2018counterfactual,yu2022surprising}, achieve strong empirical performance~\cite{samvelyan2019starcraft,papoudakis2020benchmarking}. GP-MARL complements this literature with an all-iteration high-probability value-error analysis for continuous general-sum games learned from one continuing trajectory.

\paragraph{GP Sequential Decision Making.}
GP confidence sets yield regret and finite-sample guarantees in kernelized bandits and model-based reinforcement learning, with complexity governed by maximum information gain~\cite{srinivas2009gaussian,chowdhury2017kernelized,chowdhury2019online,curi2020efficient}. Recent work treats nonepisodic nonlinear systems and finite-horizon GP planning~\cite{sukhija2024neorl,bayrooti2026no}. These results address single-agent planning; in stochastic games, model error also perturbs the selected equilibrium. Our analysis separates this equilibrium-selection effect from transition-estimation error.

\paragraph{Continuous-State Stochastic Games.}
Existence results for discounted games with uncountable states rely on measurability, continuity, compactness, and absolute continuity~\cite{nowak1985existence}. Finite state--action approximations provide constructive approximate equilibria for standard Borel games~\cite{saldi2024existence}, assuming known transitions. GP-MARL couples this approximation theory with GP transition learning and quantifies the resulting value error.

\section{Problem Formulation}
\label{sec:problem_formulation}
We formulate the continuous stochastic game, identify conditions under which approximate equilibria exist, and specify the statistical assumptions used to learn its unknown dynamics. We then state the finite-game approximation result that connects equilibrium computation in finite games to the original uncountable game. Together, these ingredients provide the mathematical foundation for GP-MARL and its convergence analysis.

\subsection{Continuous stochastic games and equilibrium existence}

\begin{definition}[General-Sum Stochastic Game~\cite{albrecht2024multi}]
\label{def:general_sum_stochastic_game}
A general-sum stochastic game is a tuple
\[
G=\bigl(S,(A_i)_{i=1}^n,(\mathcal R_i)_{i=1}^n,\mathcal T,\mu,\gamma\bigr),
\]
where $[n]=\{1,\ldots,n\}$ is the finite set of agents; $S\subseteq\mathbb R^{n_S}$ is the measurable state space, with $n_S=\sum_{i=1}^n d_i$; $A_i\subseteq\mathbb R^{k_i}$ is agent $i$'s measurable action space and $A\coloneqq\prod_{i=1}^n A_i$; $\mathcal R_i:S\times A\to\mathbb R$ is agent $i$'s reward; $\mathcal T(\cdot\mid s,a)$ is a transition probability measure on $S$; $\mu$ is the initial-state distribution; and $\gamma\in[0,1)$ is the discount factor.
\end{definition}

The following assumptions place the game within the equilibrium-existence framework of~\citet{nowak1985existence} and~\citet{saldi2024existence}. We recall the regularity notion used below.

\begin{definition}[Carath\'eodory Function]
\label{def:caratheodory}
Let $(U,\mathcal U)$ be a measurable space and let $V$ be a metric space. A function $h:U\times V\to\mathbb R$ is a Carath\'eodory function if $h(u,\cdot)$ is continuous for every $u\in U$ and $h(\cdot,v)$ is measurable for every $v\in V$.
\end{definition}

\begin{assumption}[Structural Assumptions]
\label{assumption:structural_assumptions}
We assume that:
\begin{enumerate}[label={\bfseries (\alph*)}]
    \item $S=\mathbb R^{n_S}$ is equipped with its Borel $\sigma$-algebra $\mathcal S$, which is countably generated, and the game has no terminal states;
    \item \label{assumption:structural_assumptions:action_compact} each $A_i$ is a non-empty compact metric space equipped with its Borel $\sigma$-algebra;
    \item each reward function $\mathcal R_i$ is bounded;
    \item $\gamma\in[0,1)$;
    \item $\mathcal T$ is absolutely continuous with respect to a reference probability measure $\mathbb P$, with product-measurable Radon--Nikodym derivative $b:S\times A\times S\to\mathbb R_+$ satisfying
    \[
    \begin{aligned}
        \mathcal T(B\mid s,a)
        &=\int_B b(s,a,s')\,\mathbb P(ds'),\\
        \int_S\sup_{a\in A}b(s,a,s')\,\mathbb P(ds')&<\infty,
    \end{aligned}
    \]
    for every $B\in\mathcal S$ and $s\in S$, and $b(s,\cdot,\cdot)$ is a Carath\'eodory function.
\end{enumerate}
\end{assumption}

\begin{assumption}[Fully Observable Environment]
\label{assumption:fully_observable}
At time $t$, the agents have access to the complete state--action history up to time $t$.
\end{assumption}

For a joint policy $\pi=(\pi_1,\ldots,\pi_n)$, the value of agent $i$ is
\[
V_i^\pi(s)
\coloneqq
\mathbb E_\pi\!\left[
\sum_{t=0}^{\infty}\gamma^t\mathcal R_i(s_t,a_t)
\,\middle|\,s_0=s
\right].
\]
Bounded rewards and $\gamma<1$ ensure that $V_i^\pi$ is well defined and bounded. It satisfies
\begin{equation}
\label{eq:value_function}
\begin{aligned}
V_i^\pi(s)
&=
\int_A\pi(da\mid s)\Bigl[
\mathcal R_i(s,a)\\
&\hspace{5.5em}+\gamma\int_S V_i^\pi(s')\,\mathcal T(ds'\mid s,a)
\Bigr].
\end{aligned}
\end{equation}
The associated action-value function, used in the stability proofs, is
\[
Q_i^\pi(s,a)
\coloneqq
\mathbb E_\pi\!\left[
\sum_{t=0}^{\infty}\gamma^t\mathcal R_i(s_t,a_t)
\,\middle|\,s_0=s,\ a_0=a
\right],
\]
with Bellman equation
\begin{equation}
\label{eq:action_value_function}
Q_i^\pi(s,a)
=
\mathcal R_i(s,a)
+\gamma\int_S V_i^\pi(s')\,\mathcal T(ds'\mid s,a).
\end{equation}

\begin{definition}[$\varepsilon$-Nash Equilibrium~\cite{nowak1985existence,albrecht2024multi}]
\label{def:epsilon_nash_equilibrium}
For $\varepsilon\geq0$, a joint policy $\pi^\varepsilon=(\pi_1^\varepsilon,\ldots,\pi_n^\varepsilon)$ is an $\varepsilon$-Nash equilibrium if, for every $i\in[n]$, every state $s\in S$, and every admissible unilateral deviation $\pi_i$,
\[
V_i^{\pi^\varepsilon}(s)
\geq
V_i^{(\pi_i,\pi_{-i}^\varepsilon)}(s)-\varepsilon.
\]
The case $\varepsilon=0$ recovers a Nash equilibrium.
For a compact $\mathcal C\subset S$, an $\varepsilon$-Nash equilibrium \emph{on $\mathcal C$} satisfies the same inequality for every initial state $s\in\mathcal C$; the policy, deviations, transition law, and subsequent trajectory remain defined on the full state space $S$.
\end{definition}

\begin{proposition}[Existence of $\varepsilon$-Nash Equilibria~\cite{nowak1985existence}]
\label{prop:existence_of_eps_nash_equilibrium}
Suppose Assumptions~\ref{assumption:structural_assumptions} and~\ref{assumption:fully_observable} hold. Then, for every $\varepsilon>0$, the game admits an $\varepsilon$-Nash equilibrium policy $\pi_\varepsilon$.
\end{proposition}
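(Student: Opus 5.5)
The plan is to reproduce Nowak's argument: approximate the game by one whose transition densities take finitely many ``shapes'', solve it exactly, and transfer the equilibrium back using the $L^1$ error of that approximation.

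\textbf{Step 1: reduce to a separable $L^1$ family.} By Assumption~\ref{assumption:structural_assumptions}(e), set $\bar b(s,s')\coloneqq\sup_{a\in A}b(s,a,s')$, which is $\mathbb P$-integrable in $s'$. Fix $s$. Since $A$ is compact and $a\mapsto b(s,a,\cdot)$ is Carath\'eodory, dominated convergence shows that $a\mapsto b(s,a,\cdot)\in L^1(\mathbb P)$ is continuous. Its image is therefore compact in $L^1(\mathbb P)$. Because $\mathcal S$ is countably generated, $L^1(\mathbb P)$ is separable. Choose a countable dense set $\{g_k\}$ and a partition of the unit ball into Borel pieces of diameter less than $\delta$.

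\textbf{Step 2: build the approximating game.} Fix $\delta>0$. Pick finitely many $L^1$-close density shapes and let $\tilde b(s,a,\cdot)$ be the nearest-shape projection. Do this measurably in $(s,a)$, keeping $\tilde b$ Carath\'eodory in $a$ by using a continuous partition of unity over the finite shapes. Arrange that
\[
\sup_{s,a}\|b(s,a,\cdot)-\tilde b(s,a,\cdot)\|_{L^1(\mathbb P)}\le\delta .
\]
In the approximating game $\tilde G$ the transition is $\sum_k w_k(s,a)\,g_k(s')\,\mathbb P(ds')$, with weights $w_k$ continuous in $a$ and measurable in $s$. This is the ``finitely many measures'' (additive) structure. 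For it, a stationary Nash equilibrium exists by a standard fixed-point argument. Values $v\in B(S)^n$ enter the stage game only through the finitely many numbers $\int v\,g_k\,d\mathbb P$. The stage games at each $s$ then have continuous payoffs on compact $A$, so Glicksberg's theorem gives mixed equilibria, and a measurable selection theorem applies. Kakutani--Fan--Glicksberg on the finite-dimensional vectors $(\int v_i g_k\,d\mathbb P)_{i,k}$ then closes the loop. This yields a stationary $\tilde\pi$ that is an exact equilibrium of $\tilde G$.

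\textbf{Step 3: transfer to $G$.} For any policy $\sigma$ and any $i$, the Bellman equation~\cref{eq:value_function} and a telescoping estimate give
\[
\|V_i^{\sigma}-\tilde V_i^{\sigma}\|_\infty\le\frac{\gamma\,\delta\,R_{\max}}{(1-\gamma)^2},
\]
where $R_{\max}$ bounds $|\mathcal R_i|$. The value is an integral of a bounded function against kernels that differ by at most $\delta$ in total variation. Apply this to $\tilde\pi$ and to every unilateral deviation $(\pi_i,\tilde\pi_{-i})$; history-dependent deviations are handled identically, since the estimate is pathwise in the kernel. Then $\tilde\pi$ is a $2\gamma\delta R_{\max}/(1-\gamma)^2$-Nash equilibrium of $G$. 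Choosing $\delta$ small makes this at most $\varepsilon$.

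\textbf{Main obstacle.} The hardest part is Step 2: constructing $\tilde b$ so that it is jointly measurable, continuous in $a$, and uniformly $\delta$-close in $s$. Uniformity in $s$ is delicate because $S=\mathbb R^{n_S}$ is unbounded and the compact image sets vary with $s$. I would first try the weaker, pointwise-in-$s$ approximation with $\delta(s)$ and only a countable family of shapes. I would then note that the existence argument survives with countably many measures, using Schauder--Tychonoff in $\ell^\infty$ with the product topology. Failing that, I would cite Nowak (1985) directly for this step.
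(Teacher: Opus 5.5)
\textbf{The gap is in Step~2.} It is not the uniformity in $s$ that you flag; that part is harmless once countably many shapes are allowed. For each $s$, cover the compact set $\{b(s,a,\cdot):a\in A\}$ by finitely many $\delta$-balls around densities from your dense family, and choose the first such finite family measurably in $s$. Take weights proportional to $(\delta-\|b(s,a,\cdot)-g_k\|_{L^1(\mathbb P)})_+$, normalized over that family. This gives $\|b(s,a,\cdot)-\tilde b(s,a,\cdot)\|_{L^1(\mathbb P)}<\delta$ for every $(s,a)$, with weights continuous in $a$.

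What fails is the claim that $\tilde G$ has an exact stationary Nash equilibrium by Kakutani--Fan--Glicksberg on $c=(\int v_ig_k\,d\mathbb P)_{i,k}$. Your correspondence sends $c$ to the set of such vectors generated by measurable selections $v(s)\in N_c(s)$, where $N_c(s)$ is the set of Nash payoffs of the stage game at $s$. Nash payoff sets are not convex, so its values need not be convex. Its graph need not be closed either, because weak$^*$ limits of selections of $N_{c^m}$ are only selections of $\operatorname{co}N_c$. Both properties are recovered only through Lyapunov's theorem, which needs finitely many \emph{non-atomic} measures $g_k\,d\mathbb P$, and neither ingredient is available:
\begin{itemize}
\item $\mathbb P$ is arbitrary and may have atoms.
\item No finite family of shapes is uniformly $\delta$-close in general, since $\{b(s,a,\cdot)\}_{s,a}$ need not be totally bounded (take Gaussian translates with $f^*(s,a)=s$).
\item In your countable fallback the shapes are dense, so $v\mapsto(\int v\,g_k\,d\mathbb P)_k$ is injective on $L^\infty(\mathbb P)$. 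The value set is then convex only if the set of selections of $N_c$ is, so Schauder--Tychonoff has no convex-valued map to act on.
\end{itemize}
Convexifying $N_c$ repairs the fixed point but yields only a stationary \emph{correlated} equilibrium (Nowak--Raghavan), not a Nash one. This is the same non-convexity behind the examples of Levy and Levy--McLennan, where stationary Nash equilibria fail to exist under these assumptions.

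\textbf{A route that works.} The paper gives no proof beyond the citation. The route that works, and as far as I know is in essence Nowak's, discretizes the \emph{current} state instead of the next-state density. The fixed point is then taken over strategies, where best responses are convex.
\begin{enumerate}
\item Since $\mathcal S$ is countably generated, $C(A,L^1(\mathbb P))\times C(A)^n$ is separable, and $s\mapsto(b(s,\cdot,\cdot),\mathcal R(s,\cdot))$ is measurable into it. Hence $S$ splits into countably many measurable cells $S_m$ on which this map oscillates by at most $\delta$.
\item Freezing the primitives at some $s_m\in S_m$ gives a game equivalent to one with countably many states and transitions $p(m'\mid m,a)=\int_{S_{m'}}b(s_m,a,t)\,\mathbb P(dt)$. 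These are continuous in $a$ into $\ell^1$, so their tails are uniformly small on $A$.
\item Kakutani--Fan--Glicksberg on $\prod_m\prod_i\Delta(A_i)$ with the product topology then gives a stationary equilibrium, as in Federgruen's countable-state theorem.
\item Your Step~3 transfers it to $G$, with an extra $\delta/(1-\gamma)$ term for the rewards.
\end{enumerate}
Your own route can be rescued via Lyapunov when $\{b(s,a,\cdot)\}_{s,a}$ is totally bounded and $\mathbb P$ is non-atomic, e.g.\ for bounded $f^*$ in the Gaussian model.

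\textbf{A missing hypothesis.} Both arguments need $\mathcal R_i(s,\cdot)$ to be continuous on $A$. Assumption~\ref{assumption:structural_assumptions}(c) omits this, although Nowak assumes it. It cannot be dropped: take $\gamma=0$ and the Sion--Wolfe payoff, which is bounded, Borel and zero-sum on $[0,1]^2$ but has no value. Then no $\varepsilon$-equilibrium exists for small $\varepsilon$.
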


This result establishes that the equilibrium-learning objective studied below is well posed. We next verify that our nonlinear Gaussian transition model belongs to this class.

\subsection{Unknown nonlinear dynamics}

\begin{assumption}[State Transition Dynamics]
\label{assumption:state_transition_dynamics}
The state evolves according to
\begin{equation}
\label{eq:state_transition_definition}
s_{t+1}=f^*(s_t,a_t)+w_t,
\qquad
w_t\sim\mathcal N(0,\sigma^2I_{n_S}),
\end{equation}
where $f^*:S\times A\to S$ is unknown and $(w_t)_{t\geq 0}$ is an independent and identically distributed noise sequence. Moreover, for every $s\in S$, there exists a non-negative $g_s\in L^1(S)$ such that
\begin{equation}
\label{eq:f_bounded_condition}
\sup_{a\in A}\mathcal N\!\left(s'\mid f^*(s,a),\sigma^2I_{n_S}\right)
\leq g_s(s'),
\qquad s'\in S.
\end{equation}
\end{assumption}

Condition~\ref{eq:f_bounded_condition} supplies the integrable domination required by Assumption~\ref{assumption:structural_assumptions}(e). Compactness of the action spaces makes this condition a direct consequence of continuity.

\begin{lemma}[Integrable Domination of Gaussian Transition Densities]
\label{lemma:existence_of_bound}
If $f^*:S\times A\to\mathbb R^{n_S}$ is continuous, then the Gaussian transition densities in~\cref{eq:state_transition_definition} satisfy~\cref{eq:f_bounded_condition}.
\end{lemma}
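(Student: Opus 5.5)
Fix $s\in S$. Since $A=\prod_i A_i$ is a finite product of compact metric spaces, it is compact. The map $a\mapsto f^*(s,a)$ is continuous, so its image $K_s\coloneqq f^*(s,A)\subset\mathbb R^{n_S}$ is compact, and hence bounded. Set $R_s\coloneqq\max_{a\in A}\|f^*(s,a)\|<\infty$. The plan is to replace the supremum over $a$ by a supremum over the mean $m\in K_s$ and then dominate the Gaussian density uniformly over means in the ball of radius $R_s$.

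Write $\phi(x)\coloneqq(2\pi\sigma^2)^{-n_S/2}\exp(-\|x\|^2/(2\sigma^2))$, so that $\mathcal N(s'\mid m,\sigma^2 I_{n_S})=\phi(s'-m)$. I would take the candidate dominating function
\[
g_s(s')\coloneqq
\begin{cases}
(2\pi\sigma^2)^{-n_S/2}, & \|s'\|\le 2R_s,\\
(2\pi\sigma^2)^{-n_S/2}\exp\!\bigl(-\|s'\|^2/(8\sigma^2)\bigr), & \|s'\|>2R_s.
\end{cases}
\]
The pointwise bound would then be checked in two cases. If $\|s'\|\le 2R_s$, it is immediate from $\phi\le(2\pi\sigma^2)^{-n_S/2}$. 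If $\|s'\|>2R_s$, then for every $m$ with $\|m\|\le R_s$ the reverse triangle inequality gives $\|s'-m\|\ge\|s'\|-R_s\ge\|s'\|/2$. Therefore $\phi(s'-m)\le(2\pi\sigma^2)^{-n_S/2}\exp(-\|s'\|^2/(8\sigma^2))$. Taking the supremum over $a\in A$, equivalently over $m=f^*(s,a)\in K_s$, yields~\cref{eq:f_bounded_condition}.

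It remains to check that $g_s\in L^1(S)$, and $g_s$ is plainly non-negative. Its integral splits into two pieces. The first is the constant $(2\pi\sigma^2)^{-n_S/2}$ times the Lebesgue volume of the ball of radius $2R_s$, which is finite. The second is bounded by $(2\pi\sigma^2)^{-n_S/2}\int_{\mathbb R^{n_S}}e^{-\|x\|^2/(8\sigma^2)}dx=(2\pi\sigma^2)^{-n_S/2}(8\pi\sigma^2)^{n_S/2}=2^{n_S}<\infty$.

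Measurability of $g_s$ is clear because it is piecewise continuous. Measurability of the supremum itself is not needed, since we only require a measurable integrable majorant. If the degenerate case $R_s=0$ arises, the first region is a single point and the argument is unchanged.

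No step is a real obstacle. The only point needing care is that compactness of $A$ is used exactly to make $R_s$ finite. Without it, the means could escape to infinity and no integrable uniform bound would exist, because the supremum of translated Gaussians over an unbounded set of means has infinite integral.
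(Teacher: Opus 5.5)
Your proof is correct and follows the paper's argument: compactness of $A$ and continuity of $f^*(s,\cdot)$ place all means in a bounded ball, and the reverse triangle inequality then gives a Gaussian-tail majorant. The only difference is cosmetic: the paper uses the single radial majorant $c_\sigma\exp\bigl(-(\|s'\|_2-r_s)_+^2/(2\sigma^2)\bigr)$, while your piecewise choice (constant on the ball of radius $2R_s$, a Gaussian with variance $4\sigma^2$ outside it) is cruder but makes the $L^1$ check fully explicit.
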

\begin{proof}
Fix $s\in S$. Continuity of $f^*(s,\cdot)$ and compactness of $A$ imply that
\[
K_s\coloneqq\{f^*(s,a):a\in A\}
\]
is compact, so $K_s\subseteq B_{r_s}(0)$ for some $r_s>0$. Hence
$\|s'-f^*(s,a)\|_2\geq\|s'\|_2-r_s$ for every $a\in A$. Writing $c_\sigma=(2\pi\sigma^2)^{-n_S/2}$, define
\[
g_s(s')
\coloneqq
c_\sigma\exp\!\left[
-\frac{(\|s'\|_2-r_s)_+^2}{2\sigma^2}
\right].
\]
This function is integrable on $\mathbb R^{n_S}$ and the reverse triangle inequality gives
\[
\sup_{a\in A}\mathcal N\!\left(s'\mid f^*(s,a),\sigma^2I_{n_S}\right)
\leq g_s(s'),
\]
which proves~\cref{eq:f_bounded_condition}.
\end{proof}

\begin{assumption}[Continuity]
\label{assumption:f_is_continuous}
The transition function $f^*$ is continuous on $S\times A$.
\end{assumption}

\begin{proposition}[Compatibility with the Existence Framework]
\label{prop:structural_assumptions_satisfied}
Suppose Assumption~\ref{assumption:structural_assumptions}(a--d) and Assumptions~\ref{assumption:state_transition_dynamics}--\ref{assumption:f_is_continuous} hold. Then Assumption~\ref{assumption:structural_assumptions}(e) also holds. Consequently, the stochastic game induced by~\cref{eq:state_transition_definition} admits an $\varepsilon$-Nash equilibrium for every $\varepsilon>0$.
\end{proposition}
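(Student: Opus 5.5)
The plan is to verify Assumption~\ref{assumption:structural_assumptions}(e) directly, using the Gaussian structure of~\cref{eq:state_transition_definition}, and then invoke Proposition~\ref{prop:existence_of_eps_nash_equilibrium}.

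\textbf{Choice of reference measure and density.} Lebesgue measure on $\mathbb R^{n_S}$ is not a probability measure, so I will take as reference the standard Gaussian $\mathbb P=\mathcal N(0,I_{n_S})$, with strictly positive density $\phi$. I then set
\[
b(s,a,s')\coloneqq \frac{\mathcal N\!\left(s'\mid f^*(s,a),\sigma^2I_{n_S}\right)}{\phi(s')}.
\]
This function is nonnegative and finite. By construction $\mathcal T(B\mid s,a)=\int_B b(s,a,s')\,\mathbb P(ds')$ for every Borel $B$, so $\mathcal T\ll\mathbb P$ holds.

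\textbf{Measurability and continuity.} By Assumption~\ref{assumption:f_is_continuous}, the map $(s,a,s')\mapsto b(s,a,s')$ is a composition of continuous maps, since the Gaussian density is continuous in its mean and argument. Hence $b$ is jointly continuous on $S\times A\times S$, and in particular Borel, i.e.\ product-measurable. For fixed $s$, the map $(a,s')\mapsto b(s,a,s')$ is continuous. It is therefore continuous in $a$ for each $s'$ and measurable in $s'$ for each $a$, which is the Carath\'eodory property in Definition~\ref{def:caratheodory}.

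\textbf{Integrability.} Applying Lemma~\ref{lemma:existence_of_bound}, I get
\[
\sup_{a\in A}b(s,a,s')\le \frac{g_s(s')}{\phi(s')}.
\]
This gives
\[
\int_S \sup_a b(s,a,s')\,\mathbb P(ds')\le\int g_s(s')\,ds'<\infty,
\]
because the density $\phi$ cancels against $\mathbb P(ds')=\phi(s')\,ds'$.

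\textbf{Measurability of the supremum.} The one delicate point is whether $s'\mapsto\sup_a b(s,a,s')$ is measurable. I will handle it using compactness of $A$. Since $A$ is a compact metric space, it is separable, so the supremum over $A$ equals the supremum over a countable dense subset, by continuity in $a$. A countable supremum of measurable functions is measurable.

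With (e) established, Assumption~\ref{assumption:structural_assumptions} holds in full. Proposition~\ref{prop:existence_of_eps_nash_equilibrium} then yields an $\varepsilon$-Nash equilibrium for every $\varepsilon>0$. I expect no major obstacle; the main care points are choosing a probability reference measure rather than Lebesgue measure and the measurability of the supremum.
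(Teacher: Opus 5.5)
Your proposal is correct and follows the paper's proof essentially step for step. The paper likewise divides the Gaussian Lebesgue density by an arbitrary strictly positive continuous probability density $\rho$, of which your standard Gaussian $\phi$ is one choice; it bounds $\sup_a b$ by $g_s/\rho$ via Lemma~\ref{lemma:existence_of_bound} and obtains the Carath\'eodory property from continuity of $f^*$. Your countable-dense-subset argument for the measurability of $s'\mapsto\sup_{a}b(s,a,s')$ addresses a point the paper leaves implicit. Invoking Proposition~\ref{prop:existence_of_eps_nash_equilibrium} also uses Assumption~\ref{assumption:fully_observable}, which is not among the stated hypotheses and which both you and the paper treat as standing.
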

\noindent\emph{Proof.} See Appendix~\ref{app:proof_structural_assumptions}.

Thus, the assumed dynamics simultaneously support equilibrium existence and the finite-game approximation introduced below.

\subsection{Calibrated Gaussian-process model}

Because $f^*$ is unknown, the analysis requires confidence sets that contain it uniformly over inputs, output coordinates, and learning iterations. Let $Z\coloneqq S\times A$.

\begin{definition}[All-Time Calibrated Confidence Sequence]
\label{def:confidence_set}
\label{def:all_time_valid_confidence_sequence}
For $\delta\in(0,1)$ and $N\geq0$, let $\mu_N:Z\to\mathbb R^{n_S}$, $\sigma_N:Z\to\mathbb R_+^{n_S}$, and $\beta_N(\delta)>0$. Define
\[
\begin{aligned}
\mathcal M_N(\delta)
&\coloneqq
\Bigl\{g:Z\to\mathbb R^{n_S}:\\[-0.3em]
&\quad |\mu_{N,j}(z)-g_j(z)|
\leq\beta_N(\delta)\sigma_{N,j}(z),\\[-0.3em]
&\quad \forall z\in Z,\ j\in[n_S]\Bigr\}.
\end{aligned}
\]
The sequence $(\mathcal M_N(\delta))_{N\geq0}$ is all-time calibrated for $f^*$ if
\[
\mathbb P\!\left(f^*\in\bigcap_{N\geq0}\mathcal M_N(\delta)\right)\geq1-\delta.
\]
\end{definition}

\paragraph{Coordinate-wise GP posterior.}
\label{sec:gp_posterior}
We equip each coordinate estimator with an independent prior $f_j\sim\mathcal{GP}(0,k)$ sharing a scalar kernel $k$. Given $z_i=(s_i,a_i)$ and $y_i=f^*(z_i)+\epsilon_i$, where $\epsilon_i\sim\mathcal N(0,\sigma^2I_{n_S})$, define
\[
\begin{aligned}
K_N&=[k(z_i,z_r)]_{i,r=1}^N,\\
k_N(z)&=[k(z,z_1),\ldots,k(z,z_N)]^\top.
\end{aligned}
\]
Writing $y_{1:N}^{(j)}=[y_1^{(j)},\ldots,y_N^{(j)}]^\top$, standard GP conditioning gives
\begin{subequations}
\label{eq:gp_scalar_posterior}
\begin{align}
\mu_{N,j}(z)
&=k_N(z)^\top(K_N+\sigma^2I_N)^{-1}y_{1:N}^{(j)},\\
\sigma_{N,j}^2(z)
&=k(z,z)-k_N(z)^\top(K_N+\sigma^2I_N)^{-1}k_N(z).
\end{align}
\end{subequations}

\begin{assumption}[RKHS Regularity]
\label{assumption:rkhs_regularity_assumption}
For every $j\in[n_S]$, the coordinate $f_j^*:Z\to\mathbb R$ belongs to the RKHS $\mathcal H_k$ induced by $k$ and satisfies
\(
\|f_j^*\|_{\mathcal H_k}\leq B.
\)
The kernel is uniformly bounded on the diagonal:
\(
\|k\|_\infty\coloneqq\sup_{z\in Z}k(z,z)\leq\kappa.
\)
\end{assumption}

The resulting coordinate-wise posterior mean and diagonal covariance are denoted by
\begin{subequations}
\label{eq:post_mean_cov}
\begin{align}
\label{eq:post_mean}
\mu_N(z)&=\bigl(\mu_{N,1}(z),\ldots,\mu_{N,n_S}(z)\bigr)^\top,\\
\label{eq:post_cov}
\Sigma_N(z)&=\operatorname{diag}\!\left(\sigma_{N,1}^2(z),\ldots,\sigma_{N,n_S}^2(z)\right),
\end{align}
\end{subequations}
with $\sigma_{N,j}^2(z)\leq\kappa$.

\paragraph{Bounded deterministic component and unbounded state evolution.}
Assumption~\ref{assumption:rkhs_regularity_assumption} implies that the deterministic transition component is uniformly bounded. By the reproducing property and Cauchy--Schwarz inequality,
\[
|f_j^*(z)|
=
|\langle f_j^*,k(\cdot,z)\rangle_{\mathcal H_k}|
\leq
\|f_j^*\|_{\mathcal H_k}\sqrt{k(z,z)}
\leq B\sqrt\kappa.
\]
Consequently,
\[
\|f^*(z)\|_2\leq\sqrt{n_S}\,B\sqrt\kappa,
\qquad z\in Z.
\]
Thus, the deterministic component is controlled uniformly over the unbounded state space, while the Gaussian perturbation in~\cref{eq:state_transition_definition} has unbounded support and permits genuinely unbounded state evolution. Uniformly bounded kernels and the resulting control of RKHS transition functions are standard in nonlinear-system analysis~\cite{mania2020active,kakade2020information,curi2020efficient,wagenmaker2023optimal}.

For $U=(u_1,\ldots,u_N)\in Z^N$, let $K_U=[k(u_r,u_s)]_{r,s=1}^N$. The maximum information gain used below is
\[
\Gamma_N
\coloneqq
\sup_{U\in Z^N}
\frac{1}{2}\log\det\!\left(I_N+\sigma^{-2}K_U\right),
\]
the standard kernel-dependent complexity of $N$ noisy observations~\cite{srinivas2009gaussian,chowdhury2017kernelized}.

\begin{lemma}[Well-Calibrated Confidence Sets~\cite{sukhija2024neorl}]
\label{lemma:well_calibrated_gp}
Fix $\delta\in(0,1)$ and suppose Assumption~\ref{assumption:rkhs_regularity_assumption} holds. There exists a sequence $(\beta_N(\delta))_{N=0}^{\infty}$ proportional to $\sqrt{\Gamma_N}$, where $\Gamma_N$ is the maximum information gain of $k$ after $N$ observations, such that the confidence sequence induced by $(\mu_N,\sigma_N,\beta_N(\delta))_{N=0}^{\infty}$ is all-time calibrated for $f^*$.
\end{lemma}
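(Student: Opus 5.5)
The plan is to reduce the statement to the scalar, self-normalized RKHS concentration bound of \citet{chowdhury2017kernelized} (as refined in \citet{sukhija2024neorl}), applied coordinate by coordinate, and then combine the coordinates with a union bound. Fix $j\in[n_S]$. The observations $y_i^{(j)}=f_j^*(z_i)+\epsilon_i^{(j)}$ come from a single trajectory, so the inputs $z_i=(s_i,a_i)$ are predictable with respect to the filtration $\mathcal F_{i-1}$ generated by past states, actions and noise. The noise $\epsilon_i^{(j)}$ is conditionally $\sigma$-sub-Gaussian given $\mathcal F_{i-1}$; this uses Assumption~\ref{assumption:state_transition_dynamics}, namely that $w_t$ is i.i.d.\ and independent of the history. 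This martingale structure is all the time-uniform bound requires, so the dependence induced by the single uninterrupted trajectory causes no difficulty.

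First, I split the error using the representer form of $\mu_{N,j}$ in \cref{eq:gp_scalar_posterior}:
\[
\mu_{N,j}(z)-f_j^*(z)=\bigl(k_N(z)^\top(K_N+\sigma^2I_N)^{-1}f^*_{j,1:N}-f_j^*(z)\bigr)+k_N(z)^\top(K_N+\sigma^2I_N)^{-1}\epsilon^{(j)}_{1:N}.
\]
\begin{itemize}
\item The \emph{bias term} is bounded by $B\,\sigma_{N,j}(z)$, up to the standard $\sigma$-scaling. This follows from the reproducing property together with $\|f_j^*\|_{\mathcal H_k}\le B$ from Assumption~\ref{assumption:rkhs_regularity_assumption}, by writing the posterior variance as a squared RKHS distance in the regularized feature space.
\item The \emph{noise term} is bounded by $\sigma_{N,j}(z)\,\sigma^{-1}\|\epsilon^{(j)}_{1:N}\|_{((K_N+\sigma^2I)^{-1}+I)^{-1}}$, via Cauchy--Schwarz in the same feature space.
\end{itemize}

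Second, I control the self-normalized martingale norm uniformly in $N$. Using the method of mixtures (the Gaussian supermartingale of Abbasi-Yadkori / Chowdhury--Gopalan) and Ville's inequality, with probability at least $1-\delta/n_S$ simultaneously for all $N\ge0$,
\[
\|\epsilon^{(j)}_{1:N}\|^2_{((K_N+\sigma^2 I)^{-1}+I)^{-1}}\le 2\sigma^2\bigl(\tfrac12\log\det(I+\sigma^{-2}K_N)+\log(n_S/\delta)\bigr).
\]
The log-determinant on the right is at most $\Gamma_N$ by definition of the maximum information gain. Combining the two terms gives $|\mu_{N,j}(z)-f_j^*(z)|\le\beta_N(\delta)\sigma_{N,j}(z)$ for all $z\in Z$ and all $N$, with
\[
\beta_N(\delta)=B+\sqrt{2\bigl(\Gamma_N+1+\log(n_S/\delta)\bigr)}.
\]
This is $O(\sqrt{\Gamma_N})$, i.e.\ proportional to $\sqrt{\Gamma_N}$ up to constants. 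The bound is uniform in $z$ because the only random quantity is the $z$-free martingale norm, so no covering argument is needed on the unbounded $Z$.

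Third, a union bound over the $n_S$ coordinates yields $\mathbb P\bigl(f^*\in\bigcap_N\mathcal M_N(\delta)\bigr)\ge1-\delta$, which is exactly Definition~\ref{def:all_time_valid_confidence_sequence}.

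The main obstacle is the time-uniform step. Obtaining validity for all $N$ at once, rather than for a fixed $N$, requires the supermartingale/Ville argument. Care is also needed that the variance in \cref{eq:gp_scalar_posterior} uses regularization $\sigma^2$, which matches the noise level, so the constants align with the cited bound. Beyond that, I would cite \citet{sukhija2024neorl} for the precise constants.
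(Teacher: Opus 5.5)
Your architecture is the right one. It is the argument behind the result the paper invokes; the paper gives no proof of its own beyond citing \citet{sukhija2024neorl}. The steps are: split $\mu_{N,j}-f_j^*$ into an RKHS bias term bounded by $B\sigma_{N,j}(z)$ and a noise term controlled by a $z$-free self-normalized norm; make that bound time-uniform via mixtures and Ville's inequality; and take a union bound over the $n_S$ coordinates. Your remarks on predictability along the single trajectory, and on not needing a covering of the unbounded $Z$, are also correct.

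There is, however, a genuine error in the central step. The weighting matrix you use does not go with the log-determinant you pair it with, and your displayed time-uniform inequality is false as written. Every eigenvalue of $((K_N+\sigma^2I)^{-1}+I)^{-1}$ equals $(\kappa+\sigma^2)/(1+\kappa+\sigma^2)\ge\sigma^2/(1+\sigma^2)$, where $\kappa$ is an eigenvalue of $K_N$. Hence your left-hand side is at least $\frac{\sigma^2}{1+\sigma^2}\|\epsilon^{(j)}_{1:N}\|_2^2$, which grows like $\frac{\sigma^4}{1+\sigma^2}N$ almost surely. Your right-hand side is at most $2\sigma^2(\Gamma_N+\log(n_S/\delta))=o(N)$ for, e.g., the RBF kernel, so the inequality holds for all $N$ with probability zero.

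That matrix is the one in the Chowdhury--Gopalan self-normalized bound \citep{chowdhury2017kernelized} with $\eta=\sigma^2$. There it belongs to posterior regularization $1+\eta$, not $\sigma^2$. It is paired with $\log\det((1+\eta)I+K_N)$, which contains the linear term $N\log(1+\sigma^2)$. Matching the paper's regularization $\sigma^2$ would require $\eta=\sigma^2-1$, which is inadmissible for $\sigma\le1$ (the experiments use $\sigma=0.1$). So deferring to the citation for constants does not repair this step. Relatedly, Cauchy--Schwarz does not produce your matrix. Your matrix dominates the correct one only if every eigenvalue of $K_N$ satisfies $\kappa(1-\sigma^2)\le\sigma^4$.

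The repair is to use the matrix that Cauchy--Schwarz actually produces. With feature matrix $\Phi_N$ and $V_N=\sigma^2I+\Phi_N^\top\Phi_N$, one has $\sigma_{N,j}^2(z)=\sigma^2\|\phi(z)\|^2_{V_N^{-1}}$ and
\[
\|\Phi_N^\top\epsilon^{(j)}_{1:N}\|^2_{V_N^{-1}}=\epsilon^{(j)\top}_{1:N}K_N(K_N+\sigma^2I_N)^{-1}\epsilon^{(j)}_{1:N}.
\]
So the noise term is at most $\sigma^{-1}\sigma_{N,j}(z)$ times the square root of this quadratic form. For this form your right-hand side is exactly right. Mixing the supermartingale $\exp\bigl(\sigma^{-2}\sum_{s\le N}[\epsilon_s g(z_s)-g(z_s)^2/2]\bigr)$ over an independent $g\sim\mathcal{GP}(0,k)$ gives $\det(I+\sigma^{-2}K_N)^{-1/2}\exp\bigl(\epsilon^\top K_N(K_N+\sigma^2I)^{-1}\epsilon/(2\sigma^2)\bigr)$. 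Ville's inequality then yields the bound simultaneously for all $N$. The rest of your argument goes through with $\beta_N(\delta)=B+\sqrt{2(\Gamma_N+\log(n_S/\delta))}$; your extra $+1$ is harmless.

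Finally, the additive $B$ cannot be dropped. At $N=0$ one has $\Gamma_0=0$, $\mu_0=0$ and $\sigma_{0,j}(z)=\sqrt{k(z,z)}$, so a literally proportional $\beta_0$ would force $f^*=0$. Reading ``proportional to $\sqrt{\Gamma_N}$'' as $\mathcal O(\sqrt{\Gamma_N})$ for $N\ge1$, as you do, is the only tenable interpretation.
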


This lemma converts posterior uncertainty into a simultaneous high-probability event and supplies the statistical quantity that appears explicitly in our main convergence theorem.

\subsection{Finite-game approximation}

We conclude the formulation with the approximation result that connects the continuous game to finite equilibrium computation. We first recall the discretization device.

\begin{definition}[$\delta_G$-net]
\label{def:delta_net}
Let $\delta_G>0$ and let $(E,d)$ be a compact metric space. A finite subset $E_{\delta_G}\subseteq E$ is a $\delta_G$-net of $E$ if every $e\in E$ satisfies $d(e,e_i)\leq\delta_G$ for some $e_i\in E_{\delta_G}$.
\end{definition}

Following~\citet{saldi2024existence}, choose an increasing sequence of compact sets $(\mathcal C_N)_{N\geq0}$ satisfying
\[
S=\bigcup_{N\geq0}\mathcal C_N,
\qquad
\mathcal C_N\subset\operatorname{int}(\mathcal C_{N+1}).
\]
For example, one may take
\begin{equation}
\label{eq:K_n}
\mathcal C_N=[-r_N,r_N]^{n_S},
\qquad r_N\uparrow\infty.
\end{equation}
Let $S_{\delta_G}^N$ and $A_{i,\delta_G}$ be $\delta_G$-nets of $\mathcal C_N$ and $A_i$, respectively, and choose a pseudo-state $\Delta_N\in\mathcal C_N^c$. The finite state space is
\[
X_N\coloneqq S_{\delta_G}^N\cup\{\Delta_N\}.
\]
Define the state and action projections by
\[
D_{\delta_G}^N(x)
=
\begin{cases}
\displaystyle\arg\min_{z\in S_{\delta_G}^N}\|x-z\|,&x\in\mathcal C_N,\\[0.6em]
\Delta_N,&x\in\mathcal C_N^c,
\end{cases}
\]
and $D_{i,\delta_G}(a^i)=\arg\min_{b\in A_{i,\delta_G}}\|a^i-b\|$. Resolving non-unique minimizers measurably~\cite{wagner1977survey}, the cells
\[
S_{i,\delta_G}^N
\coloneqq
\{x\in\mathcal C_N:D_{\delta_G}^N(x)=x_i\},
\qquad x_i\in S_{\delta_G}^N,
\]
give the finite measurable partition
\[
S=\mathcal C_N^c\cup\bigcup_{x_i\in S_{\delta_G}^N}S_{i,\delta_G}^N.
\]
Let $p$ denote the Gaussian transition density of the original game and let $(\nu_N)_{N\geq0}$ be probability measures supported on $\mathcal C_N^c$. Define the aggregated transition density and rewards by
\[
p_N(x\mid s,a)
=
\begin{cases}
p(x\mid s,a),&s\in\mathcal C_N,\\
\displaystyle\int_{\mathcal C_N^c}p(x\mid z,a)\,\nu_N(dz),&s\in\mathcal C_N^c,
\end{cases}
\]
and
\[
\mathcal R_{i,N}(s,a)
=
\begin{cases}
\mathcal R_i(s,a),&s\in\mathcal C_N,\\
\displaystyle\int_{\mathcal C_N^c}\mathcal R_i(z,a)\,\nu_N(dz),&s\in\mathcal C_N^c.
\end{cases}
\]
Writing $\lambda$ for Lebesgue measure, the transition probability mass function on $X_N$ is
\[
h_N(x\mid s,a)
\coloneqq
\begin{cases}
\displaystyle\int_{S_{i,\delta_G}^N}p_N(z\mid s,a)\,\lambda(dz),
&x=x_i,\\[0.8em]
\displaystyle\int_{\mathcal C_N^c}p_N(z\mid s,a)\,\lambda(dz),
&x=\Delta_N.
\end{cases}
\]
Thus the finite game is
\[
G_N(\delta_G)
\coloneqq
\bigl(X_N,(A_{i,\delta_G})_{i=1}^n,(\mathcal R_{i,N})_{i=1}^n,h_N\bigr).
\]
This construction makes $G_N(\delta_G)$ a well-defined finite stochastic game while the increasing sequence $(\mathcal C_N)$ expands toward the full state space.

\begin{proposition}[Finite-Game Approximation~\cite{saldi2024existence}]
\label{prop:approximation}
Fix an arbitrary compact set $\mathcal C\subset S=\mathbb R^{n_S}$ and let $\varepsilon>0$. There exist a discretization resolution $\delta_G>0$ and a positive integer $M=M(\mathcal C,\varepsilon,\delta_G)$ such that, for every $m\geq M$, a Nash equilibrium $\pi^m$ of $G_m(\delta_G)$ induces an $\varepsilon$-Nash equilibrium of the original game $G$ on $\mathcal C$; that is, for every $i\in[n]$, $s\in\mathcal C$, and admissible deviation $\pi_i$,
\[
V_i^{\rho^m}(s)
\geq
V_i^{(\pi_i,\rho_{-i}^m)}(s)-\varepsilon,
\]
where $\rho^m$ is the extension of $\pi^m$ to the original state space (see Remark~\ref{remark:policy_extension}).
\end{proposition}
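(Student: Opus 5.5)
The plan is to follow the structure of the argument of \citet{saldi2024existence}. We approximate the original game $G$ in two stages: first by the truncated game $G_m$ with state space $S$ and kernel $p_m$, then by the finite game $G_m(\delta_G)$. We compare values under a fixed Markov policy in each stage and show that $\rho^m$ is nearly a best response in $G$ on $\mathcal C$.

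\textbf{Extension and policy values.} Let $\rho^m$ be the extension of $\pi^m$ that is constant on the cells $S^m_{i,\delta_G}$ and on $\mathcal C_m^c$, and that plays actions in $A_{i,\delta_G}$. The first step is to show that, for any policy that is constant on cells, its value in $G_m(\delta_G)$ coincides with its value in an intermediate game $\tilde G_m$ on $S$. In $\tilde G_m$ the transitions and rewards are averaged over cells with respect to suitable weights; this is the usual quantization identity. Next, we bound $\sup_{s}|\tilde Q_{i}-Q_{i,m}|$. By the Bellman equations~\cref{eq:action_value_function} and a $\gamma$-contraction argument, this reduces to the one-step discrepancies in reward and in total variation of the transitions. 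Both discrepancies vanish as $\delta_G\to0$ on the compact set $\mathcal C_m$. This uses continuity of $f^*$ (Assumption~\ref{assumption:f_is_continuous}), hence uniform continuity on compacts of the Gaussian density in total variation. Reward continuity in $(s,a)$ is needed as well; if it is not already assumed, I would add it. Action quantization is handled in the same way, using compactness of $A_i$ (Assumption~\ref{assumption:structural_assumptions}\ref{assumption:structural_assumptions:action_compact}).

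\textbf{Truncation error.} We compare $G_m$ with $G$ for initial states in $\mathcal C$. The two games agree while the trajectory stays in $\mathcal C_m$. Their discounted values therefore differ by at most $\frac{2R_{\max}}{1-\gamma}$ times the discounted probability of exiting $\mathcal C_m$ before time $T$, plus a tail term $\gamma^T R_{\max}/(1-\gamma)$. Because $f^*$ is bounded on compacts and the Gaussian noise has light tails, the probability of leaving $\mathcal C_m$ within $T$ steps from $\mathcal C$ tends to zero as $m\to\infty$. This holds uniformly over policies, since $\sup_a$ is controlled on compacts by Lemma~\ref{lemma:existence_of_bound}-type domination iterated $T$ times. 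Choosing $T$ first and then $M$ gives a uniform value gap of at most $\varepsilon/4$ over all joint policies and all $s\in\mathcal C$.

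\textbf{Combining.} For any deviation $\pi_i$, we use the Nash property of $\pi^m$ in $G_m(\delta_G)$, transferred to $\tilde G_m$. There, a best response to the cell-constant $\rho^m_{-i}$ can be taken cell-constant up to the quantization error. Chaining the inequalities
$V^{(\pi_i,\rho^m_{-i})}_i\le V^{(\pi_i,\rho^m_{-i})}_{i,m}+\varepsilon/4\le V^{\rho^m}_{i,m}+\varepsilon/2\le V^{\rho^m}_i+3\varepsilon/4$ on $\mathcal C$ yields the claim. The \textbf{main obstacle} is the middle step. The deviation $\pi_i$ is arbitrary and not cell-constant, so we must show that restricting agent $i$'s best response in $\tilde G_m$ to discretized actions and cell-measurable strategies loses at most $\varepsilon/4$, uniformly in $m\ge M$ at fixed $\delta_G$. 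I would handle this via the Bellman optimality operator of the single-agent MDP induced by $\rho^m_{-i}$, using the uniform total-variation continuity of $p$ on $\mathcal C_m$. The concern is that this modulus of continuity depends on $m$. To address this, I would fix $\delta_G$ using continuity only on a large compact set $\mathcal C_{m_0}$ that trajectories from $\mathcal C$ rarely leave, and absorb the rest into the truncation error.
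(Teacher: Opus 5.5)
Your plan is the truncation-then-quantization scheme of \citet{saldi2024existence}, and it matches what the paper relies on, since the paper gives no proof of its own and simply imports the result from that work: you compare $G$ with the truncated game $G_m$, pass to an intermediate game on $S$ whose kernel and rewards depend on the state only through its cell, use the quantization identity, and transfer the finite-game Nash property through agent $i$'s induced single-agent problem. Both caveats you flag are real and your fixes are right: continuity of $\mathcal R_i$ in $(s,a)$ must be added, because Assumption~\ref{assumption:structural_assumptions}(c) gives only boundedness while $G_m(\delta_G)$ evaluates rewards at grid points; and because $\delta_G$ is fixed before $m\to\infty$, your second paragraph's global sup-norm contraction bound must be replaced by a trajectory-level comparison, with quantization errors controlled on one fixed compact set and the policy-uniform probability of leaving that set within the effective horizon absorbing the rest.
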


\emph{The set $\mathcal C$ is not an operating-domain restriction: agents interact on $S$, and the values above integrate complete trajectories on $S$}. It is an arbitrary set of states on which the finite-approximation guarantee is uniform. Proposition~\ref{prop:approximation} supplies this deterministic component to the convergence analysis below.

\section{Algorithm and Theoretical Analysis}
\label{sec:algorithm_and_theoretical_analysis}
GP-MARL follows centralized training with decentralized execution (CTDE)~\cite{lowe2017multi}. A central learner fits the transition model from joint interaction data, solves a finite approximation of the learned game, and distributes the resulting policies for decentralized execution. We use this framework to characterize high-probability sufficient conditions for convergence to an $\varepsilon$-Nash equilibrium.

\begin{algorithm}[t]
\caption{GP-MARL}
\label{alg:finite_game_rl}
\begin{algorithmic}[1]
\Require Kernel $k$, Nash equilibrium error $\varepsilon > 0$, discount factor $\gamma\in[0,1)$, mesh size $\delta_G>0$, finite state approximations $(X_t)_{t=0}^N$, and finite action approximations
$A_{i,\delta_G}$ for $i \in [n]$.

\State Initialize policies $\pi_1^0,\ldots,\pi_n^0$
\State Initialize GP model with prior mean $\mu_0$, variance matrix $\Sigma_0$, and true variance estimate $\Hat{\sigma}^2 \ge 0$
\State Initialize replay buffer $\mathcal D \gets \emptyset$
\State Sample initial state $s_0 \sim \mu$ (see Definition~\ref{def:general_sum_stochastic_game})

\For{$t=0,1,\ldots,N$}
    \State Update the coordinate-wise GP posterior mean $\Hat{f}_t=\mu_t$ (Equation~\ref{eq:post_mean}) and covariance $\Sigma_{\Hat{f}_t}=\Sigma_t$ (Equation~\ref{eq:post_cov}) using replay buffer $\mathcal{D}$; see Section~\ref{sec:gp_posterior}
    \State Update estimated transition density $\widehat{\mathcal{T}}_t(\cdot \mid s, a) = \mathcal{N}(\cdot \mid \Hat{f}_t(s, a), \Sigma_{\Hat{f}_t}(s, a) + \Hat{\sigma}^2 I_{n_S})$
    \State Construct the finite game $\widehat G_t(\delta_G)$ on $X_t$
    and $A_{1,\delta_G},\ldots,A_{n,\delta_G}$ using
    $\widehat{\mathcal T}_{t}$
    \State Compute a $\varepsilon$-Nash equilibrium
    $\pi^t=(\pi_1^t,\ldots,\pi_n^t)$ of $\hat G_t(\delta_G)$
    \State Extend $\pi^t$ to the original state-action space according to
    Remark~\ref{remark:policy_extension} and distribute the individual
    policies to the agents
    \State Agents independently sample actions $a_t \sim \pi^t(\cdot \mid s_t)$
    \State Observe the next state $s_{t+1} \sim \mathcal{T}(\cdot \mid s_t, a_t)$
    \State Store $(s_t, a_t, s_{t+1})$ in replay buffer $\mathcal{D}$
\EndFor

\State \Return $\pi^N=(\pi_1^N,\ldots,\pi_n^N)$
\end{algorithmic}
\end{algorithm}

The convergence analysis uses complementary value-function stability bounds for perturbations of the transition model and policy.
\begin{lemma}
\label{lemma:lemma1}
    Suppose Assumption~\ref{assumption:structural_assumptions} holds. Let $f,g:S\times A\to\bb{R}^{n_S}$ be continuous and bounded, and let $\Sigma_f,\Sigma_g:S\times A\to\bb{R}^{n_S\times n_S}$ be symmetric positive-definite matrix-valued maps satisfying $\sup_{(s,a)}\max\{\|\Sigma_f(s,a)\|_2^2,\|\Sigma_g(s,a)\|_2^2\}<\infty$. Define $\mathcal{T}_h(\cdot\mid s,a)=\mathcal{N}(h(s,a),\Sigma_h(s,a))$ for $h\in\{f,g\}$, and denote the corresponding value functions by $V_{i,h}^\pi$. Write $D_{f,g}(s,a)\coloneqq D_{\mathrm{KL}}(\mathcal T_f(\cdot\mid s,a)\|\mathcal T_g(\cdot\mid s,a))$. If $\gamma\in[0,1)$, then
    \[
    \begin{aligned}
        \|V_{i,f}^\pi-V_{i,g}^\pi\|_\infty
        &\le
        \frac{\sqrt{2}\,\gamma M_i}{(1-\gamma)^2}\\
        &\quad\times\sup_{(s,a)\in S\times A}
        \sqrt{D_{f,g}(s,a)}.
    \end{aligned}
    \]
\end{lemma}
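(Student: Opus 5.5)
We would prove this with a standard simulation-lemma argument, combined with Pinsker's inequality. Here $M_i$ is read as a bound on the reward, $\sup_{(s,a)}|\mathcal R_i(s,a)|\le M_i$, which exists by Assumption~\ref{assumption:structural_assumptions}(c). Fix the policy $\pi$ and agent $i$. Both $V_{i,f}^\pi$ and $V_{i,g}^\pi$ solve the Bellman equation~\cref{eq:value_function}, with kernels $\mathcal T_f$ and $\mathcal T_g$ respectively. Both are bounded by $M_i/(1-\gamma)$ in sup norm, since the rewards are bounded and $\gamma<1$. Measurability of the value functions is inherited from the continuity of $f$, $g$, $\Sigma_f$, $\Sigma_g$ together with Assumption~\ref{assumption:structural_assumptions}, so all the integrals below are well defined.

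\textbf{Step 1 (recursion).} Subtracting the two Bellman equations at a state $s$ gives
\[
V_{i,f}^\pi(s)-V_{i,g}^\pi(s)=\gamma\int_A\pi(da\mid s)\Bigl[\int_S V_{i,f}^\pi\,d\mathcal T_f(\cdot\mid s,a)-\int_S V_{i,g}^\pi\,d\mathcal T_g(\cdot\mid s,a)\Bigr].
\]
We then add and subtract $\int V_{i,f}^\pi\,d\mathcal T_g$ inside the bracket. This splits the bracket into two terms:
\begin{itemize}
\item a model-mismatch term $\int V_{i,f}^\pi\,d(\mathcal T_f-\mathcal T_g)$;
\item a propagation term $\int (V_{i,f}^\pi-V_{i,g}^\pi)\,d\mathcal T_g$, which is bounded by $\|V_{i,f}^\pi-V_{i,g}^\pi\|_\infty$.
\end{itemize}

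\textbf{Step 2 (mismatch via total variation).} For any bounded measurable $h$,
\[
\Bigl|\int h\,d(P-Q)\Bigr|\le 2\|h\|_\infty\,\mathrm{TV}(P,Q),
\]
where $\mathrm{TV}(P,Q)=\sup_B|P(B)-Q(B)|$. Pinsker's inequality gives $\mathrm{TV}\le\sqrt{D_{\mathrm{KL}}/2}$. Combining these with $\|V_{i,f}^\pi\|_\infty\le M_i/(1-\gamma)$, the mismatch term is at most
\[
\frac{2M_i}{1-\gamma}\sqrt{\tfrac12 D_{f,g}(s,a)}=\frac{\sqrt2\,M_i}{1-\gamma}\sqrt{D_{f,g}(s,a)}.
\]
The Gaussian KL divergence is finite because both covariances are positive definite. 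If the supremum of $D_{f,g}$ is infinite, the claim holds trivially.

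\textbf{Step 3 (close the recursion).} Take the supremum over $s$ and set $\Delta\coloneqq\|V_{i,f}^\pi-V_{i,g}^\pi\|_\infty$. Steps 1 and 2 give
\[
\Delta\le\gamma\frac{\sqrt2 M_i}{1-\gamma}\sup_{(s,a)}\sqrt{D_{f,g}(s,a)}+\gamma\Delta.
\]
Since $\Delta<\infty$, we can rearrange and divide by $1-\gamma$, which yields the stated constant $\sqrt2\gamma M_i/(1-\gamma)^2$.

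The main obstacle is bookkeeping rather than substance. The TV convention must match the constant: with the $\sup_B$ convention, the bound picks up a factor 2, and Pinsker then gives exactly $\sqrt2$. One could sharpen the constant by centering $h$, but that is unnecessary here. The other point to check is measurability of $s\mapsto\int V\,d\mathcal T_h(\cdot\mid s,a)$, which follows from the continuity hypotheses. Finiteness of the value functions on the unbounded state space needs no compactness, since it comes directly from bounded rewards.
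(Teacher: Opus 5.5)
Your proposal is correct and is essentially the paper's proof: you subtract the two Bellman equations and insert a cross term, bound the propagation term by $\gamma\|V_{i,f}^\pi-V_{i,g}^\pi\|_\infty$ and the model-mismatch term via total variation and Pinsker's inequality, and then rearrange to get the constant $\sqrt{2}\,\gamma M_i/(1-\gamma)^2$. The only cosmetic differences are that you add and subtract $\int V_{i,f}^\pi\,d\mathcal T_g$ where the paper uses $\int V_{i,g}^\pi\,d\mathcal T_f$, and that you apply Pinsker before closing the recursion rather than after; neither affects the argument or the constant.
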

\noindent\emph{Proof.} See Appendix~\ref{app:proof_transition_stability}.

\begin{lemma}
\label{lemma:lemma2}
Suppose Assumption~\ref{assumption:structural_assumptions} holds, so $|\mathcal{R}_i(s,a)|\le M_i$ for every $(s,a)\in S\times A$ and $i\in[n]$. For valid policies $\pi=(\pi_1,\ldots,\pi_n)$ and $\rho=(\rho_1,\ldots,\rho_n)$, let $d_{\mathrm{TV}}(\pi,\rho)\coloneqq\sup_{s\in S}d_{\mathrm{TV}}(\pi(\cdot\mid s),\rho(\cdot\mid s))$, where $d_{\mathrm{TV}}(\mu,\nu)\coloneqq\frac{1}{2}\int_A|\mu-\nu|(da)$. Then, for every $i\in[n]$,
\[
    \left\|V_i^\pi-V_i^\rho\right\|_\infty
    \le
    \frac{2M_i}{(1-\gamma)^2}
    d_{\mathrm{TV}}(\pi,\rho).
\]
\end{lemma}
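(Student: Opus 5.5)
We compare $V_i^\pi$ and $V_i^\rho$ through their Bellman equations \cref{eq:value_function} and \cref{eq:action_value_function}; both policies act on the same transition kernel $\mathcal T$. First note that $\|V_i^\pi\|_\infty\le M_i/(1-\gamma)$, and likewise $\|Q_i^\pi\|_\infty\le M_i/(1-\gamma)$, since rewards are bounded by $M_i$ and $\gamma<1$. Fix $s\in S$ and write
\[
V_i^\pi(s)-V_i^\rho(s)
=\int_A Q_i^\pi(s,a)\,\bigl(\pi(da\mid s)-\rho(da\mid s)\bigr)
+\int_A \bigl(Q_i^\pi(s,a)-Q_i^\rho(s,a)\bigr)\,\rho(da\mid s),
\]
which adds and subtracts $\int_A Q_i^\pi(s,a)\rho(da\mid s)$.

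For the first term we use the standard duality bound: for a bounded measurable $h$ and probability measures $\mu,\nu$, $|\int h\,d(\mu-\nu)|\le 2\|h\|_\infty d_{\mathrm{TV}}(\mu,\nu)$, with the factor $2$ matching the paper's convention $d_{\mathrm{TV}}=\tfrac12\int|\mu-\nu|$. This bounds the first term by $\tfrac{2M_i}{1-\gamma}d_{\mathrm{TV}}(\pi,\rho)$. For the second term, \cref{eq:action_value_function} gives
\[
Q_i^\pi(s,a)-Q_i^\rho(s,a)=\gamma\int_S (V_i^\pi-V_i^\rho)(s')\,\mathcal T(ds'\mid s,a),
\]
whose absolute value is at most $\gamma\|V_i^\pi-V_i^\rho\|_\infty$. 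Since $\rho(\cdot\mid s)$ is a probability measure, the same bound holds after integrating against it.

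Taking the supremum over $s$ gives the recursive inequality
\[
\|V_i^\pi-V_i^\rho\|_\infty\le \tfrac{2M_i}{1-\gamma}d_{\mathrm{TV}}(\pi,\rho)+\gamma\|V_i^\pi-V_i^\rho\|_\infty .
\]
The left side is finite because both value functions are bounded, so we can rearrange to obtain the claimed bound with constant $2M_i/(1-\gamma)^2$.

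The only delicate step is measurability and the use of stationary Markov policies. We need $Q_i^\pi(s,\cdot)$ to be measurable in $a$, so that the integrals against $\pi(\cdot\mid s)-\rho(\cdot\mid s)$ make sense. This follows from product measurability of $b$ in Assumption~\ref{assumption:structural_assumptions}(e) together with measurability of $\mathcal R_i$. We also need the "valid policies" to be stationary Markov kernels, so that \cref{eq:value_function} holds. We will state these points briefly rather than belabor them.
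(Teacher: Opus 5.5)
Your proposal is correct and follows the paper's proof essentially step for step. It uses the same add-and-subtract of $\int_A Q_i^\pi(s,a)\rho(da\mid s)$, the bound $2\|Q_i^\pi\|_\infty d_{\mathrm{TV}}$ with $\|Q_i^\pi\|_\infty\le M_i/(1-\gamma)$, the bound $\gamma\|V_i^\pi-V_i^\rho\|_\infty$ on the $Q$-difference, and the final rearrangement; your remarks on finiteness of the left side, measurability, and stationarity just make explicit what the paper leaves implicit.
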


\noindent\emph{Proof.} See Appendix~\ref{app:proof_policy_stability}.

\begin{remark}[Localized Stability Bounds]
\label{remark:localized_stability_bounds}
For a compact set $\mathcal{C}\subset S$, write
\[
\begin{aligned}
    \|h\|_{\mathcal{C}}
    &\coloneqq
    \sup_{s\in\mathcal{C}}|h(s)|,\\
    d_{\mathrm{TV},\mathcal{C}}(\pi,\rho)
    &\coloneqq
    \sup_{s\in\mathcal{C}}
    d_{\mathrm{TV}}\!\left(\pi(\cdot\mid s),\rho(\cdot\mid s)\right).
\end{aligned}
\]
Lemmas~\ref{lemma:lemma1}--\ref{lemma:lemma2} remain valid in the localized form used with Proposition~\ref{prop:approximation}. Here $\|\cdot\|_{\mathcal C}$ evaluates full-state-space value functions at initial states in $\mathcal C$: Bellman integrals and trajectories remain on $S$. The policy comparison is local, so the selected policies need agree on $\mathcal C$ only and may vary outside it.
\end{remark}

\begin{remark}
\label{remark:policy_extension}
We may extend the joint policies $(\pi^k)_{k=0}^\infty$ to a joint policy on $S \times A$ through the following procedure as in~\cite{saldi2024existence}:
\[
\frac{d\rho^k}{d\lambda}(\cdot\mid s)
\coloneqq
\sum_{a\in A_{\delta_G}}
\pi^k\!\left(a\mid D_{\delta_G}^k(s)\right)\delta_a(\cdot),
\qquad s\in S,
\]
where
\[
A_{\delta_G}
=
\bigtimes_{i=1}^n A_{i,\delta_G}
\]
and $\delta_a$ is the Dirac delta function centered at $a$.
\end{remark}

The result below characterizes convergence: any policy-selection and data-collection procedure satisfying its stated conditions inherits the guarantee. Fix $\varepsilon>0$ and an arbitrary compact $\mathcal C\subset S$, using $d_{\mathrm{TV},\mathcal C}$ from Remark~\ref{remark:localized_stability_bounds}. At iteration $\ell$, let the true kernel $\mathcal T$ and learned Gaussian kernel $\mathcal T_\ell$ satisfy the conditions of Lemma~\ref{lemma:lemma1}, and define
\[
\eta_\ell
\coloneqq
\sup_{(s,a)\in\mathcal{C}\times A}
\sqrt{D_{\mathrm{KL}}\!\left(
\mathcal{T}(\cdot\mid s,a)
\,\middle\|\,
\mathcal{T}_\ell(\cdot\mid s,a)
\right)}.
\]
For the GP specialization, let $\mathcal{D}_\ell$ contain the transition
observations collected so far and let $\mu_\ell$ and $\Sigma_\ell$ be the
corresponding posterior mean and covariance, so that
\[
\begin{aligned}
\mathcal{T}_\ell(\cdot\mid s,a)
&=
\mathcal{N}\!\left(
\mu_\ell(s,a),
\Sigma_\ell(s,a)+\sigma^2 I_{n_S}
\right),\\
\mathcal{T}(\cdot\mid s,a)
&=
\mathcal{N}\!\left(
f^*(s,a),
\sigma^2 I_{n_S}
\right),
\end{aligned}
\]
respectively. Define
\[
q_\ell
\coloneqq
\sup_{\substack{(s,a)\in\mathcal{C}\times A\\j\in[n_S]}}
\sigma_{\ell,j}^2(s,a),
\qquad
n_S\coloneqq\sum_{i=1}^n d_i.
\]
The bounded-kernel RKHS discussion in Section~\ref{sec:problem_formulation} gives uniform bounds on $f^*$ and the posterior variances, so the GP specialization satisfies the required boundedness conditions and $q_\ell<\infty$ for every $\ell$.

Consider the finite-game approximations $(G_\ell(\delta))_{\ell=0}^\infty$
from \S~\ref{sec:problem_formulation}, where
$S_\delta^\ell\subseteq S_\delta^{\ell+1}$ and $S_\delta^\ell$ is a
$\delta$-net of
\[
\mathcal{C}_\ell\coloneqq[-r_\ell,r_\ell]^{n_S},
\]
for an increasing sequence $(r_\ell)_{\ell = 0}^\infty$ satisfying
$r_\ell\to+\infty$. Let $\delta\leq\delta_G$, where $\delta_G$ is given
by Proposition~\ref{prop:approximation}, and let
$(\rho^\ell)_{\ell=0}^\infty$ be the resulting joint policies, extended
to the original state-action space as in
Remark~\ref{remark:policy_extension}. For a policy $\rho^*$ whose restriction to $\mathcal C$ is the limit of the selected policies, define
\[
\Delta_\ell^\rho
\coloneqq
d_{\mathrm{TV},\mathcal{C}}(\rho^\ell,\rho^*).
\]

\begin{theorem}[KL-to-Value Error and High-Probability Convergence]
\label{theorem:convergence_rate}
Suppose Assumptions~\ref{assumption:structural_assumptions}--%
\ref{assumption:f_is_continuous} hold. For any sequence of learned
Gaussian transition kernels $(\mathcal T_\ell)_{\ell\geq0}$ described
above and every $i\in[n]$,
\[
\left\|
V_i^{\rho^\ell}(\cdot;\mathcal{T}_\ell)
-
V_i^{\rho^*}(\cdot;\mathcal{T})
\right\|_{\mathcal{C}}
\leq
\frac{2M_i}{(1-\gamma)^2}\Delta_\ell^\rho
+
\frac{\sqrt{2}\gamma M_i}{(1-\gamma)^2}\eta_\ell.
\]
Hence, stable equilibrium selection on $\mathcal C$ ($\Delta_\ell^\rho\to0$) and
uniform KL convergence ($\eta_\ell\to0$) imply value convergence on
$\mathcal C$ and, via Proposition~\ref{prop:approximation}, ensure that
$\rho^*$ is an $\varepsilon$-Nash equilibrium on $\mathcal C$ in the sense of
Proposition~\ref{prop:approximation} for any such Gaussian transition 
estimator. If, additionally, Assumption~\ref{assumption:rkhs_regularity_assumption}
holds and the coordinates of $f^*$ are modeled using Algorithm~\ref{alg:finite_game_rl}, then, for every
$\alpha\in(0,1)$, with probability at least $1-\alpha$, simultaneously
for all $\ell\geq0$,
\[
\eta_\ell^2
\leq
n_S\!\left(
\frac{q_\ell^2}{4\sigma^4}
+
\frac{\beta_\ell^2(\alpha)q_\ell}{2\sigma^2}
\right),
\]
and hence, using
$\beta_\ell(\alpha)=\mathcal{O}(\sqrt{\Gamma_\ell})$,
\[
\begin{aligned}
&\left\|
V_i^{\rho^\ell}(\cdot;\mathcal{T}_\ell)
-
V_i^{\rho^*}(\cdot;\mathcal{T})
\right\|_{\mathcal{C}}\\
&\quad=
\mathcal{O}\!\left(
\Delta_\ell^\rho
+
\sqrt{n_S}
\left[
\frac{q_\ell}{\sigma^2}
+
\frac{\sqrt{\Gamma_\ell q_\ell}}{\sigma}
\right]
\right).
\end{aligned}
\]
Consequently, if $\Delta_\ell^\rho\to0$, $q_\ell\to0$, and
$\Gamma_\ell q_\ell\to0$, then,
with probability at least $1-\alpha$,
\[
\left\|
V_i^{\rho^\ell}(\cdot;\mathcal{T}_\ell)
-
V_i^{\rho^*}(\cdot;\mathcal{T})
\right\|_{\mathcal{C}}
\to0,
\]
and $\rho^*$ is an $\varepsilon$-Nash equilibrium on $\mathcal C$ in the sense of
Proposition~\ref{prop:approximation}.
\end{theorem}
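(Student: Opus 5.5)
The plan is to split the value error by a triangle inequality through the intermediate quantity $V_i^{\rho^*}(\cdot;\mathcal T_\ell)$, or alternatively through $V_i^{\rho^\ell}(\cdot;\mathcal T)$. This gives
\[
\bigl\|V_i^{\rho^\ell}(\cdot;\mathcal T_\ell)-V_i^{\rho^*}(\cdot;\mathcal T)\bigr\|_{\mathcal C}
\le
\bigl\|V_i^{\rho^\ell}(\cdot;\mathcal T_\ell)-V_i^{\rho^*}(\cdot;\mathcal T_\ell)\bigr\|_{\mathcal C}
+\bigl\|V_i^{\rho^*}(\cdot;\mathcal T_\ell)-V_i^{\rho^*}(\cdot;\mathcal T)\bigr\|_{\mathcal C}.
\]
The first term has a fixed kernel and two policies. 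I will bound it by the localized form of Lemma~\ref{lemma:lemma2} (Remark~\ref{remark:localized_stability_bounds}), which gives $\frac{2M_i}{(1-\gamma)^2}\Delta_\ell^\rho$. The second term has a fixed policy and two Gaussian kernels. I will bound it by the localized form of Lemma~\ref{lemma:lemma1}, with $f=f^*$, $\Sigma_f=\sigma^2I$, and the learned mean and covariance as $g$. This gives $\frac{\sqrt2\gamma M_i}{(1-\gamma)^2}\eta_\ell$.

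The main obstacle is the localization. The Bellman recursion leaves $\mathcal C$, so a sup of KL over $\mathcal C\times A$ alone does not immediately control errors incurred at states outside $\mathcal C$. I will lean on Remark~\ref{remark:localized_stability_bounds} as stated. For the proof itself, I would first try to argue via the simulation lemma: the value difference equals $\gamma\sum_t\gamma^t\mathbb E[\int V\,(d\mathcal T-d\mathcal T_\ell)]$, and Pinsker's inequality gives $d_{\mathrm{TV}}\le\sqrt{D_{\mathrm{KL}}/2}$. I would then note honestly that the rigorous version requires the KL sup over the states actually visited. If needed, I would enlarge the region to the relevant one, using the fact that everything is uniformly bounded.

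The qualitative consequence then follows. If $\Delta_\ell^\rho\to0$ and $\eta_\ell\to0$, the bound forces uniform convergence on $\mathcal C$. Since each $\rho^\ell$ is the extension of an equilibrium of a finite game with $\delta\le\delta_G$ and $\ell$ eventually exceeding $M$, Proposition~\ref{prop:approximation} yields the $\varepsilon$-Nash property. I will pass it to $\rho^*$ by applying the same two-term bound to each deviation pair $(\pi_i,\rho^\ell_{-i})$ versus $(\pi_i,\rho^*_{-i})$. Here the TV distance of the joint policy is bounded by that of $\rho^\ell_{-i}$ versus $\rho^*_{-i}$, which is at most $\Delta_\ell^\rho$. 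Taking limits gives the inequality up to $\varepsilon$.

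For the GP part, I will use the closed-form Gaussian KL between $\mathcal N(f^*,\sigma^2I)$ and $\mathcal N(\mu_\ell,\Sigma_\ell+\sigma^2I)$. Because both covariances are diagonal, the KL splits over coordinates $j$ as
\[
\frac12\Bigl[\frac{\sigma^2}{\sigma^2+\sigma_j^2}-1+\log\frac{\sigma^2+\sigma_j^2}{\sigma^2}+\frac{(f_j^*-\mu_j)^2}{\sigma^2+\sigma_j^2}\Bigr].
\]
Write $x=\sigma_j^2/\sigma^2$. The variance part is $\frac12[\log(1+x)-\frac{x}{1+x}]$, which is at most $x^2/4$ by the elementary bound $\log(1+x)\le x$ and $x-\frac{x}{1+x}=\frac{x^2}{1+x}\le x^2$. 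This gives $q_\ell^2/(4\sigma^4)$. On the all-time calibrated event of Lemma~\ref{lemma:well_calibrated_gp} with $\delta=\alpha$, the mean part is at most $\beta_\ell^2\sigma_j^2/(2\sigma^2)\le\beta_\ell^2q_\ell/(2\sigma^2)$. Summing over $n_S$ coordinates and taking the sup over $\mathcal C\times A$ gives the stated $\eta_\ell^2$ bound simultaneously for all $\ell$. Then $\sqrt{a+b}\le\sqrt a+\sqrt b$ and $\beta_\ell=\mathcal O(\sqrt{\Gamma_\ell})$ yield the $\mathcal O(\cdot)$ rate and, under the stated limits, convergence with probability at least $1-\alpha$.
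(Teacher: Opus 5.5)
Your route is the paper's. You use the same split through $V_i^{\rho^*}(\cdot;\mathcal T_\ell)$, Lemma~\ref{lemma:lemma2} for the policy term and Lemma~\ref{lemma:lemma1} (with $f=f^*$, $\Sigma_f=\sigma^2I_{n_S}$) for the kernel term. You also use the same coordinate-wise Gaussian KL on the all-time calibrated event, and transfer to $\rho^*$ by bounding the deviation pairs' total variation by $\Delta_\ell^\rho$.

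One step is wrong as written. The bound $\log(1+x)\le x$ only gives $\frac12\bigl[\log(1+x)-\frac{x}{1+x}\bigr]\le\frac{x^2}{2(1+x)}\le\frac{x^2}{2}$, so your argument yields $n_Sq_\ell^2/(2\sigma^4)$, not the stated $n_Sq_\ell^2/(4\sigma^4)$. The stated constant needs $\log(1+x)-\frac{x}{1+x}\le\frac{x^2}{2}$, which is what the paper uses. It holds because $h(x)\coloneqq\frac{x^2}{2}-\log(1+x)+\frac{x}{1+x}$ satisfies $h(0)=0$ and $h'(x)=x\bigl(1-(1+x)^{-2}\bigr)\ge0$, and it is sharp as $x\to0$. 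The $\mathcal O$-rate is unaffected.

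The localization problem you flag is a genuine gap, and leaning on Remark~\ref{remark:localized_stability_bounds} does not close it. The paper's proof makes the same move without further argument. The localized lemmas are false in general. Two policies agreeing on $\mathcal C$, or two kernels agreeing on $\mathcal C\times A$, give $d_{\mathrm{TV},\mathcal C}=0$ or $\eta_\ell=0$. Yet values at $s\in\mathcal C$ still differ, because they integrate rewards along trajectories that leave $\mathcal C$ (take $\mathcal C=\{s_0\}$). Nor can you replace $\mathcal C$ by $S$ in the GP part: for kernels decaying at infinity (RBF, Mat\'ern), the posterior variance returns to its prior level far from the data.

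Your truncation idea is the right repair.
\begin{itemize}
\item Under Assumption~\ref{assumption:rkhs_regularity_assumption}, $\|f^*\|_2\le c\coloneqq\sqrt{n_S}B\sqrt\kappa$. Hence along true-model trajectories, for every policy, initial state and each $t\ge1$, $s_t\in B_R(0)$ with probability at least $1-\tau(R)$, where $\tau(R)\coloneqq\mathbb P(\|w\|_2>R-c)$.
\item Unrolling both Bellman differences along true-model trajectories then bounds the error, uniformly in the policy as the $\varepsilon$-Nash transfer requires. The bound involves the supremum of $\sqrt{D_{\mathrm{KL}}}$ over $(\mathcal C\cup B_R(0))\times A$ and of the policy total variation over $\mathcal C\cup B_R(0)$, plus tail terms of order $M_i\tau(R)/(1-\gamma)^2$.
\item For this, use your alternative split through $V_i^{\rho^\ell}(\cdot;\mathcal T)$, so the policy comparison also runs under $\mathcal T$. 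Under the paper's split it runs under $\mathcal T_\ell$, whose mean is controlled on the calibrated event only by $\|\mu_\ell\|_2\le c+\sqrt{n_S}\,\beta_\ell\sqrt\kappa$, which may grow with $\ell$.
\end{itemize}
This proves a corrected statement, not the display as written: the region is enlarged, a tail term appears, and the limit hypotheses must hold on every compact set.

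A similar caveat applies to your appeal to Proposition~\ref{prop:approximation}. You treat $M$ as fixed, but $\rho^\ell$ comes from a finite approximation of the learned game, which changes with $\ell$, and $\delta_G$ and $M$ are game-dependent. The paper's ``for $\ell$ large enough'' needs the same uniformity over $(\mathcal T_\ell)$.
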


\noindent\emph{Proof.} See Appendix~\ref{app:proof_convergence_rate}.

\paragraph{Proof structure.}
The estimator-agnostic statement follows from
\[
\begin{aligned}
&\left\|V_i^{\rho^\ell}(\cdot;\mathcal T_\ell)
-V_i^{\rho^*}(\cdot;\mathcal T)\right\|_{\mathcal C}\\
&\quad\leq
\left\|V_i^{\rho^\ell}(\cdot;\mathcal T_\ell)
-V_i^{\rho^\ell}(\cdot;\mathcal T)\right\|_{\mathcal C}\\
&\qquad+
\left\|V_i^{\rho^\ell}-V_i^{\rho^*}\right\|_{\mathcal C},
\end{aligned}
\]
followed by Lemmas~\ref{lemma:lemma1}--\ref{lemma:lemma2}. For the GP specialization, fix $z=(s,a)$ and write $r_{\ell j}=\sigma_{\ell,j}^2(z)/\sigma^2$. The Gaussian KL divergence is exactly
\[
\begin{aligned}
&D_{\mathrm{KL}}\!\left(\mathcal T(\cdot\mid z)\,\middle\|\,
\mathcal T_\ell(\cdot\mid z)\right)\\
&\quad=\frac12\sum_{j=1}^{n_S}\Biggl[
\log(1+r_{\ell j})-\frac{r_{\ell j}}{1+r_{\ell j}}\\
&\hspace{7.5em}+
\frac{(\mu_{\ell,j}(z)-f_j^*(z))^2}
{\sigma^2+\sigma_{\ell,j}^2(z)}\Biggr].
\end{aligned}
\]
On the all-time confidence event, the last term is at most
$\beta_\ell^2(\alpha)r_{\ell j}$. Applying
$\log(1+r)-r/(1+r)\leq r^2/2$ and $r_{\ell j}\leq q_\ell/\sigma^2$
gives the stated bound on $\eta_\ell^2$. Appendix~\ref{app:proof_convergence_rate} provides the complete argument, including the $\varepsilon$-Nash conclusion.

\paragraph{Roles of the assumptions.}
Assumptions~\ref{assumption:structural_assumptions}--\ref{assumption:f_is_continuous}, together with the Gaussian-kernel conditions stated above, support the estimator-agnostic part of Theorem~\ref{theorem:convergence_rate} and the finite-game conclusion. Assumption~\ref{assumption:rkhs_regularity_assumption} and the all-time calibration result are used for the GP-specific high-probability bound. In both cases, $\mathcal C$ specifies the initial states over which the error is uniform; transition integrals and trajectories remain on $S$.

\begin{remark}[Interpretation of the Convergence Bound]
\label{remark:policy_convergence}
\label{remark:dependence_on_the_number_of_agents}
\label{remark:discussion_of_gp-marl}
The rate separates
\[
\begin{aligned}
&\underbrace{\Delta_\ell^\rho}_{\text{equilibrium selection}}
+
\underbrace{\sqrt{n_S}
\left[
\frac{q_\ell}{\sigma^2}
+
\frac{\sqrt{\Gamma_\ell q_\ell}}{\sigma}
\right]}_{\text{transition estimation}},\\
&n_S=\sum_{i=1}^n d_i.
\end{aligned}
\]
The term $\Delta_\ell^\rho=d_{\mathrm{TV},\mathcal C}(\rho^\ell,\rho^*)$ measures agreement only on the arbitrary set $\mathcal C$; the selected policies may change outside it. The theorem is deliberate about how this local stability and statistical contraction are achieved. The quantity $q_\ell$ is the largest posterior variance on $\mathcal C\times A$, while $\Gamma_\ell$ controls confidence width. Thus, $\sqrt{n_S}$ is the direct output-dimension factor, with further dependence through $q_\ell$ and $\Gamma_\ell$. More generally, $\Delta_\ell^\rho\to0$ and $\eta_\ell\to0$ yield value convergence and an $\varepsilon$-Nash equilibrium on $\mathcal C$; the GP conditions provide an explicit high-probability guarantee for $\eta_\ell\to0$.
\end{remark}

\section{Controlled Numerical Illustration}
\label{sec:experiments}

The paper's contribution is theoretical; this controlled study visualizes the kernel and dimensional quantities in Theorem~\ref{theorem:convergence_rate}. Finite games are solved accurately to isolate transition learning, with implementation details in Appendix~\ref{app:experimental_details} (code is available \href{https://anonymous.4open.science/r/gpmarl-443B}{here}).

\paragraph{Benchmark.}
Each of the $n$ agents has a scalar state and action, so $n_S=n$, $S=\mathbb{R}^n$, and $A_i=[-1,1]$. The dynamics and common reward are
\[
\begin{aligned}
s_{t+1}&=f^*(s_t,a_t)+w_t,\\
f_j^*(s_t,a_t)&=\tanh(s_{t,j}+0.5a_{t,j}),\\
w_t&\sim\mathcal{N}(0,0.1^2I_{n_S}),
\end{aligned}
\]
and
\[
\mathcal{R}_i(s,a)=-\min\{\|s\|_2^2+\|a\|_2^2,4\}.
\]
These dynamics and rewards satisfy the paper's boundedness conditions. State transitions are modeled coordinate-wise with a common RBF or Mat\'ern kernel satisfying Assumption~\ref{assumption:rkhs_regularity_assumption}.

\paragraph{Finite-game implementation.}
At every recorded update $\ell$, we use a fixed compact set and state net, denoted by $\mathcal C_\ell=\mathcal C$ and $S_{\delta_G}^\ell=S_{\delta_G}$, and choose a fixed pseudo-state $\Delta_\ell = \Delta \in\mathcal C^c$. Taking $\nu_\ell$ to be the Dirac measure at $\Delta_\ell$ gives
\[
p_\ell(x\mid s,a)
=
\begin{cases}
p(x\mid s,a),&s\in\mathcal C,\\
p(x\mid\Delta_\ell,a),&s\in\mathcal C^c.
\end{cases}
\]
Taking $\widehat\sigma^2 = 0$, Algorithm~\ref{alg:finite_game_rl} forms
\[
\widehat{\mathcal T}_\ell(\cdot\mid s,a)
=
\mathcal N\!\left(
\cdot\mid\mu_\ell(s,a),
\Sigma_\ell(s,a)
\right),
\]
constructs $\widehat G_\ell(\delta_G)$ from this learned kernel, solves it for an $\varepsilon$-Nash equilibrium, and extends the resulting policy as in Remark~\ref{remark:policy_extension}. Unless stated otherwise, GP-MARL is initialized with a uniform policy.

\paragraph{Protocol and metric.}
At each update, GP-MARL constructs and solves the finite game induced by its GP posterior. The benchmark admits a high-accuracy reference policy obtained by one-dimensional value iteration with $\gamma=0.9$; the true dynamics are used only for evaluation, while GP-MARL learns exclusively from observed transitions. We report $\ln e_\ell$, where
\[
e_\ell
\coloneqq
\left|
V_1^{\rho^\ell}(s_{t_\ell};\widehat{\mathcal{T}}_\ell)
-
V_1^*(s_{t_\ell};\mathcal{T})
\right|.
\]
This metric is a trajectory-level counterpart of the theorem's value discrepancy. Appendix~\ref{app:experimental_details} specifies the grids, update schedules, kernel calibration, reference computation, and quadrature.

The evaluation reference uses knowledge of $f^*$, whereas GP-MARL constructs its game from $(\mu_\ell,\Sigma_\ell)$ using only transitions in $\mathcal D_\ell$. Thus, $\rho^\ell$ reflects both the current transition estimate and its posterior uncertainty over $S\times A$. The learned transition model is defined on the full joint input $z=(s,a)$, and the resulting finite game is solved in its multi-agent form. We evaluate the learned policy through the pointwise value discrepancy at states encountered along the trajectory, providing a trajectory-level illustration of value convergence.

\begin{figure}[t]
    \centering
    \includegraphics[width=\linewidth]{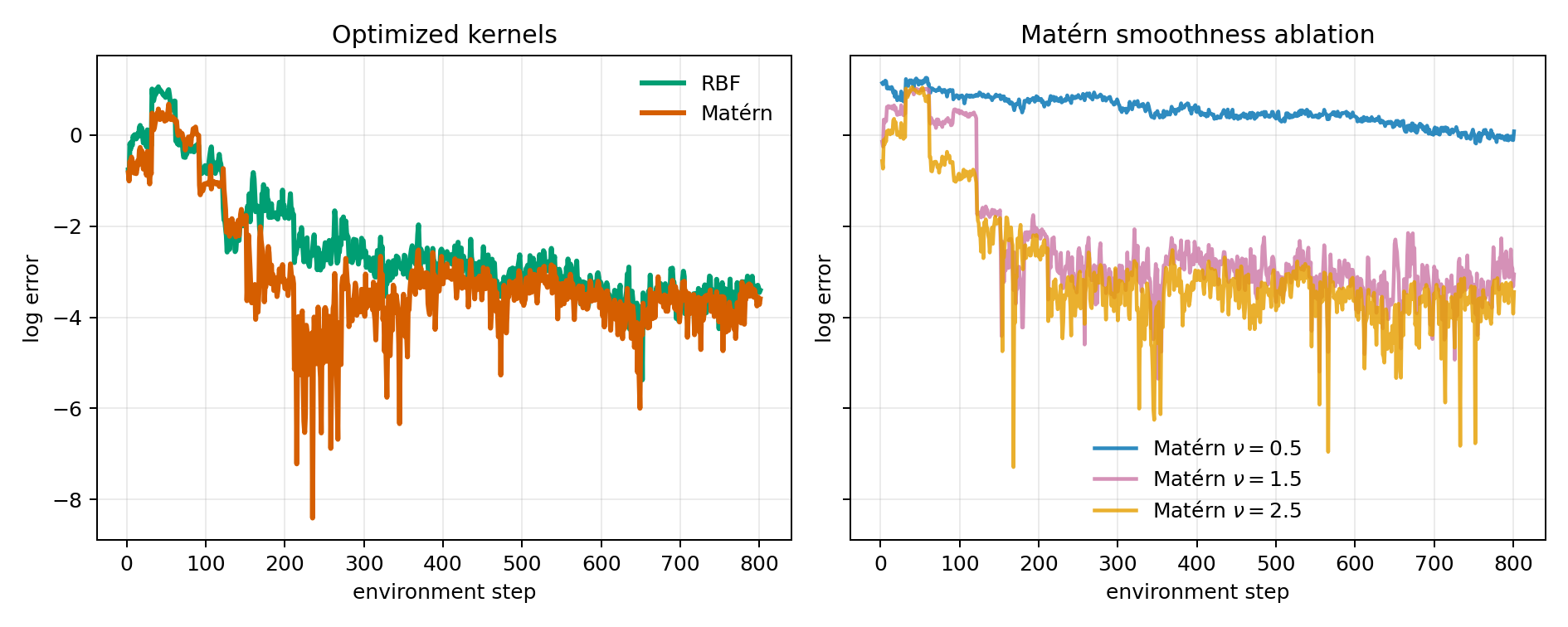}
    \caption{Kernel analysis for two-player GP-MARL. Left: value-error convergence for the RBF and Mat\'ern kernels. Right: effect of the Mat\'ern smoothness parameter, with all other kernel hyperparameters fixed.}
    \label{fig:optimized_2player}
\end{figure}

\begin{figure}[t]
    \centering
    \includegraphics[width=\linewidth]{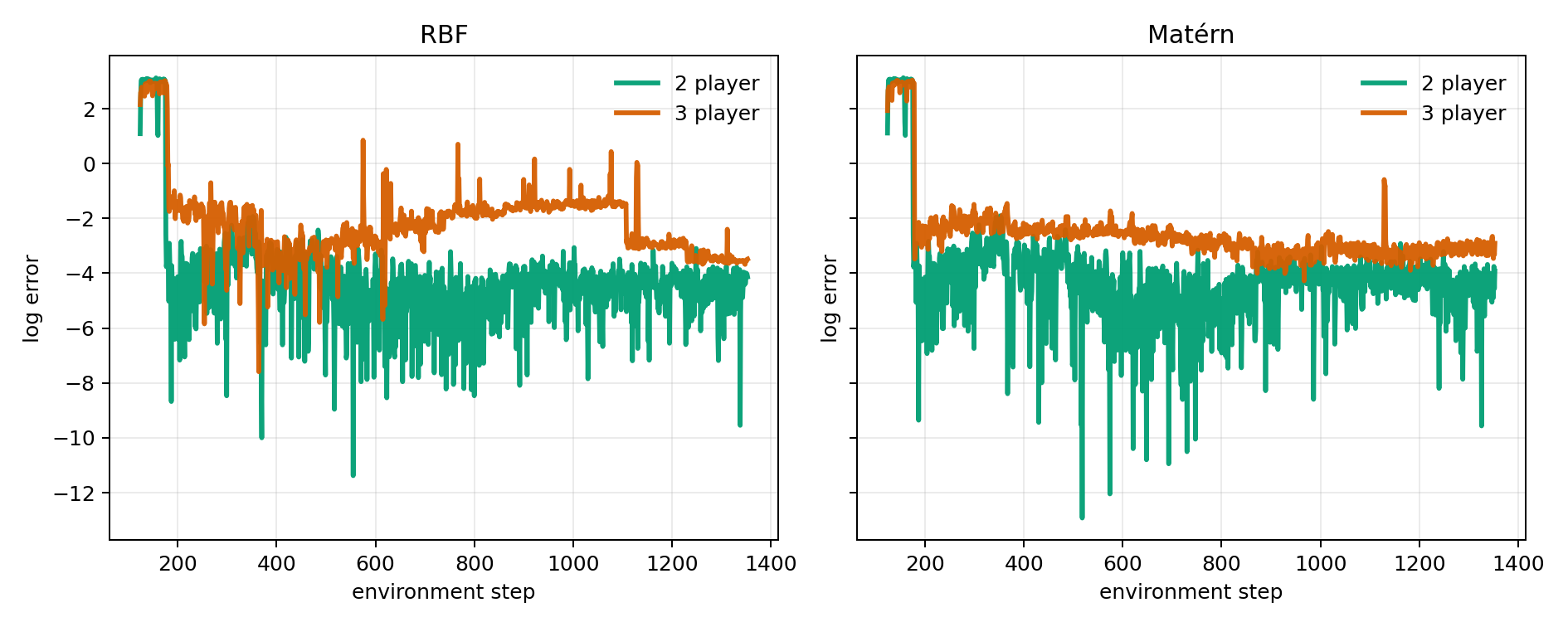}
    \caption{Agent-scaling analysis for GP-MARL. Left: two- and three-player value-error convergence with the RBF kernel. Right: the corresponding comparison with the Mat\'ern kernel.}
    \label{fig:varying_players}
\end{figure}

\paragraph{Results.}
Figure~\ref{fig:optimized_2player} shows that the optimized Mat\'ern and RBF kernels first attain $e_\ell<0.05$ after $151$ and $312$ interactions, respectively. The smoothness ablation illustrates the kernel-dependent roles of Theorem~\ref{theorem:convergence_rate}.

The agent-scaling study in Figure~\ref{fig:varying_players} reaches $e_\ell<0.1$ after $175$ and $252$ interactions for two and three agents with the RBF kernel, respectively, and after $175$ and $178$ interactions with the Mat\'ern kernel, illustrating the role of the state-space dimension $n_S$.

\section{Discussion}
\label{sec:discussion}
The controlled illustration connects the observed empirical behavior to two dependencies made explicit in Theorem~\ref{theorem:convergence_rate}. First, the kernel comparison illustrates how kernel regularity influences convergence, with greater Mat\'ern smoothness benefiting this infinitely differentiable benchmark. Second, the two- and three-agent comparison illustrates the role of the state-space dimension $n_S$ in the convergence behavior. Together, these results provide a focused empirical interpretation of the theorem's kernel- and dimension-dependent terms.

\section{Limitations and Future Work}
\label{sec:limitations_and_future_work}

Theorem~\ref{theorem:convergence_rate} provides algorithm-independent design targets by separating local policy-selection stability from statistical contraction. Establishing selection and data-acquisition guarantees that imply $\Delta_\ell^\rho\to0$ and $\eta_\ell\to0$ is a natural next theoretical direction; for GP models, $q_\ell\to0$ and $\Gamma_\ell q_\ell\to0$ give an explicit objective. Promising extensions include estimating the transition-noise variance $\widehat{\sigma}^2$ from the interaction trajectory and extending the analysis to broader information structures, including partial observability.

\section{Conclusion}
\label{sec:conclusion}

We introduced GP-MARL, a CTDE framework for analyzing continuous general-sum stochastic games with unknown Gaussian dynamics. On any prescribed compact set of states, with interaction defined on the full unbounded state space, local policy-selection stability and uniform KL convergence guarantee value convergence and an $\varepsilon$-Nash equilibrium in the sense of Proposition~\ref{prop:approximation}. The GP specialization verifies the statistical condition through posterior variance and kernel information gain, with explicit dependence on joint state dimension. Controlled numerical results illustrate the value-convergence behavior captured by these theoretical guarantees.

\appendix
\section{Proofs}
\label{app:proofs}

\subsection{Proof of Proposition~\ref{prop:structural_assumptions_satisfied}}
\label{app:proof_structural_assumptions}
\begin{proof}
It remains only to verify that Assumption~\ref{assumption:structural_assumptions}(e) is satisfied. Define
\[
X = f^*(s, a) + w,
\]
where $w \sim \mathcal{N}(0, \sigma^2I_{n_S})$. Since $f^*(s, a) \in S$ is deterministic, it follows immediately that $X$ is multivariate normal with mean $f^*(s, a)$ and covariance matrix $\sigma^2I_{n_S}$. 

Therefore, the Radon--Nikodym derivative
\[
\frac{d\mathcal{T}(\cdot \mid s, a)}{d\lambda}
\]
is a multivariate Gaussian density, where $\lambda$ denotes Lebesgue measure on $\bb{R}^{n_S}$.

We now establish the existence of a probability measure $\bb{P}$ and a function $b \colon S \times A \times S \to \bb{R}_+$ satisfying
\[
    \int_S \linfty{b(s, \cdot, s^\prime)} \bb{P}(ds^\prime) < +\infty,
\]
with $b(s, \cdot, \cdot)$ a Carathéodory function. Since the Borel $\sigma$-algebra $\mathcal{S}$ on $S$ is countably generated, we may define
\[
    h(s, a, \cdot) \coloneqq \frac{d\mathcal{T}(\cdot \mid s, a)}{d\lambda},
\]
and modify it appropriately to obtain the desired function $b$. Observe that although $h$ is defined on $S \times A \times S$, the Lebesgue measure $\lambda$ is not a probability measure on $(S, \mathcal{S})$. We therefore proceed as follows.

Let $\rho \colon S \to \bb{R}_+$ be a strictly positive continuous density satisfying
\[
\int_S \rho(s) \lambda(ds)  = 1,
\]
and define
\[
\bb{P}(B) \coloneqq \int_B \rho(s) \lambda(ds), \qquad B\in\mathcal S.
\]
Then $\bb{P}$ is a probability measure~\cite{cohn2013measure}. We now set
\[
    b(s, a, \cdot) = \frac{h(s, a, \cdot)}{\rho(\cdot)}.
\]
By construction,
\[
    \frac{d\bb{P}}{d\lambda} = \rho,
\]
and therefore
\begin{equation*}
\begin{split}
    \int_S b(s, a, s^\prime) \bb{P}(ds^\prime) & = \int_S b(s, a, s^\prime) \rho(s^\prime) \lambda(ds^\prime) \\
    & = \int_S h(s, a, s^\prime) \lambda(ds^\prime) \\
    & = \int_S \mathcal{T}(ds^\prime \mid s, a). \\
\end{split}
\end{equation*}

Moreover, we obtain the estimate
\[
\begin{split}
    \linfty{b(s, \cdot, s^\prime)} & = \linfty{\frac{h(s, \cdot, s^\prime)}{\rho(s^\prime)}} \\
    & \le \frac{g_s(s^\prime)}{\rho(s^\prime)}, \quad \text{(Lemma~\ref{lemma:existence_of_bound}).} \\
\end{split}
\]
Consequently,
\[
\begin{split}
    \int_S \linfty{b(s, \cdot, s^\prime)} \bb{P}(ds^\prime) & \le \int_S \frac{g_s(s^\prime)}{\rho(s^\prime)} \rho(s^\prime) \lambda(ds^\prime) \\
    & = \int_S g_s(s^\prime) \lambda(ds^\prime) \\
    & < +\infty. \\
\end{split}
\]

Finally, continuity of $f^*$ and strict positivity and continuity of $\rho$ imply that $b(s,\cdot,\cdot)$ is continuous in $(a,s')$ for every fixed $s$. Its explicit Gaussian form also makes it product-measurable, so it is a Carath\'eodory function.
\end{proof}

\subsection{Proof of Lemma~\ref{lemma:lemma1}}
\label{app:proof_transition_stability}
\begin{proof}
We first establish that the value function $V_i^\pi$ is bounded. By Assumption~\ref{assumption:structural_assumptions}, the reward function $\mathcal{R}_i$ is uniformly bounded by $M_i$
\begin{equation*}
\begin{split}
    |V^\pi_i(s)| & = \left| \bb{E}_\pi \left[ \sum_{n=0}^\infty \gamma^n \mathcal{R}_i(s_n, a_n) \mid s_0 = s\right] \right|\\
    & \le \bb{E}_\pi \left[ \sum_{n=0}^\infty \gamma^n |\mathcal{R}_i(s_n, a_n)| \mid s_0 = s\right] \\
    & \le \bb{E}_\pi \left[ \sum_{n=0}^\infty \gamma^n M_i \mid s_0 = s\right] \\
    & = M_i \frac{1}{1 - \gamma} \\
    & < +\infty \\
\end{split}
\end{equation*}
for $\gamma \in [0, 1)$.

We now compare the value functions induced by the two transition models.
For brevity, write
\[
V_{i,f}^\pi(s)
\coloneqq
V_i^\pi(s;f,\Sigma_f),
\qquad
V_{i,g}^\pi(s)
\coloneqq
V_i^\pi(s;g,\Sigma_g).
\]
Their Bellman equations are
\[
V_{i,f}^\pi(s)
=
\int_A \pi(da\mid s)
\left[
\mathcal R_i(s,a)
+
\gamma
\int_S
V_{i,f}^\pi(s')
\mathcal T_f(ds'\mid s,a)
\right],
\]
and
\[
V_{i,g}^\pi(s)
=
\int_A \pi(da\mid s)
\left[
\mathcal R_i(s,a)
+
\gamma
\int_S
V_{i,g}^\pi(s')
\mathcal T_g(ds'\mid s,a)
\right].
\]
Subtracting these equations, the immediate reward terms cancel, yielding
\[
\begin{aligned}
V_{i,f}^\pi(s)-V_{i,g}^\pi(s)
=
\gamma\int_A \pi(da\mid s)
\Bigg[
&\int_S
V_{i,f}^\pi(s')
\mathcal T_f(ds'\mid s,a) - \int_S
V_{i,g}^\pi(s')
\mathcal T_g(ds'\mid s,a)
\Bigg].
\end{aligned}
\]
Adding and subtracting
\[
\int_S
V_{i,g}^\pi(s')
\mathcal T_f(ds'\mid s,a)
\]
gives
\[
\begin{aligned}
V_{i,f}^\pi(s)-V_{i,g}^\pi(s)
=
&\,
\gamma\int_A \pi(da\mid s)
\int_S
\bigl(V_{i,f}^\pi(s')-V_{i,g}^\pi(s')\bigr)
\mathcal T_f(ds'\mid s,a)
\\
&+
\gamma\int_A \pi(da\mid s)
\int_S
V_{i,g}^\pi(s')
\bigl(\mathcal T_f-\mathcal T_g\bigr)
(ds'\mid s,a).
\end{aligned}
\]
Taking absolute values and using
\[
\|V_{i,g}^\pi\|_\infty
\le
\frac{M_i}{1-\gamma},
\]
we obtain
\[
\begin{aligned}
|V_{i,f}^\pi(s)-V_{i,g}^\pi(s)|
\le
&\,
\gamma
\|V_{i,f}^\pi-V_{i,g}^\pi\|_\infty
\\
&+
\frac{\gamma M_i}{1-\gamma}
\sup_{(s,a)\in S\times A}
\int_S
\left|
\mathcal T_f-\mathcal T_g
\right|
(ds'\mid s,a).
\end{aligned}
\]
Taking the supremum over $s\in S$ yields
\[
\|V_{i,f}^\pi-V_{i,g}^\pi\|_\infty
\le
\gamma
\|V_{i,f}^\pi-V_{i,g}^\pi\|_\infty
+
\frac{\gamma M_i}{1-\gamma}
D,
\]
where
\[
D
\coloneqq
\sup_{(s,a)\in S\times A}
\int_S
\left|
\mathcal T_f-\mathcal T_g
\right|
(ds'\mid s,a).
\]
Hence,
\[
\|V_{i,f}^\pi-V_{i,g}^\pi\|_\infty
\le
\frac{\gamma M_i}{(1-\gamma)^2}
D.
\]
Now, using the convention
\[
d_{\mathrm{TV}}(P,Q)
=
\frac12\int |dP-dQ|,
\]
Pinsker's inequality gives
\[
\int |dP-dQ|
\le
\sqrt{2D_{\mathrm{KL}}(P\|Q)}.
\]
Therefore,
\[
D
\le
\sqrt{2}
\sup_{(s,a)\in S\times A}
\sqrt{
D_{\mathrm{KL}}
\left(
\mathcal N(f(s,a),\Sigma_f(s, a))
\,\middle\|\,
\mathcal N(g(s,a),\Sigma_g(s, a))
\right)
}.
\]
Combining the two bounds,
\[
\|V_{i,f}^\pi-V_{i,g}^\pi\|_\infty
\le
\frac{\sqrt{2}\,\gamma M_i}{(1-\gamma)^2}
\sup_{(s,a)\in S\times A}
\sqrt{
D_{\mathrm{KL}}
\left(
\mathcal N(f(s,a),\Sigma_f(s, a))
\,\middle\|\,
\mathcal N(g(s,a),\Sigma_g(s, a))
\right)
}.
\]
\end{proof}

\subsection{Proof of Lemma~\ref{lemma:lemma2}}
\label{app:proof_policy_stability}
\begin{proof}
For every \(s\in S\),
\[
    V_i^\pi(s)
    =
    \int_A Q_i^\pi(s,a)\,\pi(da\mid s),
    \qquad
    V_i^\rho(s)
    =
    \int_A Q_i^\rho(s,a)\,\rho(da\mid s).
\]
Adding and subtracting
\(\int_A Q_i^\pi(s,a)\rho(da\mid s)\) yields
\[
\begin{aligned}
    |V_i^\pi(s)-V_i^\rho(s)|
    &\le
    \left|
        \int_A
        Q_i^\pi(s,a)
        (\pi-\rho)(da\mid s)
    \right| \\
    &\quad
    +
    \int_A
    |Q_i^\pi(s,a)-Q_i^\rho(s,a)|
    \rho(da\mid s).
\end{aligned}
\]

By the Bellman equation,
\[
    Q_i^\pi(s,a)
    =
    \mathcal{R}_i(s,a)
    +
    \gamma
    \int_S
    V_i^\pi(s')
    \mathcal{T}(ds'\mid s,a),
\]
and similarly for \(Q_i^\rho\). Hence,
\[
    Q_i^\pi(s,a)-Q_i^\rho(s,a)
    =
    \gamma
    \int_S
    \bigl(
        V_i^\pi(s')-V_i^\rho(s')
    \bigr)
    \mathcal{T}(ds'\mid s,a).
\]
Therefore,
\[
\begin{aligned}
    |Q_i^\pi(s,a)-Q_i^\rho(s,a)|
    &\le
    \gamma
    \int_S
    |V_i^\pi(s')-V_i^\rho(s')|
    \mathcal{T}(ds'\mid s,a) \\
    &\le
    \gamma
    \left\|V_i^\pi-V_i^\rho\right\|_\infty.
\end{aligned}
\]
Consequently,
\[
\begin{aligned}
    |V_i^\pi(s)-V_i^\rho(s)|
    &\le
    \left|
        \int_A
        Q_i^\pi(s,a)
        (\pi-\rho)(da\mid s)
    \right|
    +
    \gamma
    \left\|V_i^\pi-V_i^\rho\right\|_\infty.
\end{aligned}
\]

For any bounded measurable function \(f\),
\[
    \left|
        \int_A f(a)(\mu-\nu)(da)
    \right|
    \le
    \|f\|_\infty
    \int_A |\mu-\nu|(da)
    =
    2\|f\|_\infty d_{\mathrm{TV}}(\mu,\nu).
\]
It follows that
\[
\begin{aligned}
    \left|
        \int_A
        Q_i^\pi(s,a)
        (\pi-\rho)(da\mid s)
    \right|
    &\le
    2
    \|Q_i^\pi\|_\infty
    d_{\mathrm{TV}}
    \bigl(
        \pi(\cdot\mid s),
        \rho(\cdot\mid s)
    \bigr) \\
    &\le
    2
    \|Q_i^\pi\|_\infty
    d_{\mathrm{TV}}(\pi,\rho).
\end{aligned}
\]

Moreover, since
\(|\mathcal{R}_i(s,a)|\le M_i\),
\[
\begin{aligned}
    |Q_i^\pi(s,a)|
    &=
    \left|
        \mathbb{E}_\pi
        \left[
            \sum_{k=0}^\infty
            \gamma^k
            \mathcal{R}_i(s_k,a_k)
            \,\middle|\,
            s_0=s,\ a_0=a
        \right]
    \right| \\
    &\le
    \mathbb{E}_\pi
    \left[
        \sum_{k=0}^\infty
        \gamma^k
        |\mathcal{R}_i(s_k,a_k)|
        \,\middle|\,
        s_0=s,\ a_0=a
    \right] \\
    &\le
    M_i
    \sum_{k=0}^\infty \gamma^k
    =
    \frac{M_i}{1-\gamma}.
\end{aligned}
\]
Thus,
\[
    \|Q_i^\pi\|_\infty
    \le
    \frac{M_i}{1-\gamma}.
\]
Combining the preceding estimates gives, for every \(s\in S\),
\[
    |V_i^\pi(s)-V_i^\rho(s)|
    \le
    \frac{2M_i}{1-\gamma}
    d_{\mathrm{TV}}(\pi,\rho)
    +
    \gamma
    \left\|V_i^\pi-V_i^\rho\right\|_\infty.
\]
Taking the supremum over \(s\in S\) yields
\[
    \left\|V_i^\pi-V_i^\rho\right\|_\infty
    \le
    \frac{2M_i}{1-\gamma}
    d_{\mathrm{TV}}(\pi,\rho)
    +
    \gamma
    \left\|V_i^\pi-V_i^\rho\right\|_\infty.
\]
Rearranging,
\[
    (1-\gamma)
    \left\|V_i^\pi-V_i^\rho\right\|_\infty
    \le
    \frac{2M_i}{1-\gamma}
    d_{\mathrm{TV}}(\pi,\rho),
\]
and therefore
\[
    \left\|V_i^\pi-V_i^\rho\right\|_\infty
    \le
    \frac{2M_i}{(1-\gamma)^2}
    d_{\mathrm{TV}}(\pi,\rho).
\]
\end{proof}

\subsection{Proof of Theorem~\ref{theorem:convergence_rate}}
\label{app:proof_convergence_rate}
\begin{proof}
We first decompose the error into a policy-approximation term and a
transition-model term. By the triangle inequality,
\begin{align}
&
\left\|
V_i^{\rho^\ell}(\cdot;\mathcal{T}_\ell)
-
V_i^{\rho^*}(\cdot;\mathcal{T})
\right\|_\mathcal{C}
\nonumber\\
&\qquad\le
\left\|
V_i^{\rho^\ell}(\cdot;\mathcal{T}_\ell)
-
V_i^{\rho^*}(\cdot;\mathcal{T}_\ell)
\right\|_\mathcal{C}
+
\left\|
V_i^{\rho^*}(\cdot;\mathcal{T}_\ell)
-
V_i^{\rho^*}(\cdot;\mathcal{T})
\right\|_\mathcal{C}.
\label{eq:convergence_decomposition}
\end{align}
Applying Lemma~\ref{lemma:lemma2} to the first term, with the
transition kernel $\mathcal{T}_\ell$ held fixed, gives
\[
\left\|
V_i^{\rho^\ell}(\cdot;\mathcal{T}_\ell)
-
V_i^{\rho^*}(\cdot;\mathcal{T}_\ell)
\right\|_\mathcal{C}
\le
\frac{2M_i}{(1-\gamma)^2}
\Delta_\ell^\rho.
\]
For the second term, Lemma~\ref{lemma:lemma1}, applied with the joint
policy $\rho^*$ held fixed, yields
\[
\left\|
V_i^{\rho^*}(\cdot;\mathcal{T}_\ell)
-
V_i^{\rho^*}(\cdot;\mathcal{T})
\right\|_\mathcal{C}
\le
\frac{\sqrt{2}\gamma M_i}{(1-\gamma)^2}
\eta_\ell.
\]
Combining the two inequalities proves
\[
\left\|
V_i^{\rho^\ell}(\cdot;\mathcal{T}_\ell)
-
V_i^{\rho^*}(\cdot;\mathcal{T})
\right\|_\mathcal{C}
=
\mathcal O(\Delta_\ell^\rho+\eta_\ell).
\]

It remains to bound $\eta_\ell$ under the Gaussian-process model.
Let
\[
\mathcal E_\alpha
\coloneqq
\left\{
f^*
\in
\bigcap_{\ell\geq0}
\mathcal M_\ell(\alpha)
\right\}.
\]
By Lemma~\ref{lemma:well_calibrated_gp},
\[
\mathbb P(\mathcal E_\alpha)\geq1-\alpha.
\]
We henceforth work on the event $\mathcal E_\alpha$ and $\mathcal{C} \times A$. By the definition
of the confidence sets, for every $\ell\geq0$, $(s,a)\in \mathcal{C} \times A$,
and $j\in[n_S]$,
\begin{equation}
\left|
\mu_{\ell,j}(s,a)-f_j^*(s,a)
\right|
\le
\beta_\ell(\alpha)\sigma_{\ell,j}(s,a).
\label{eq:gp_confidence_event}
\end{equation}

For brevity, fix $z=(s,a)$ and define
\[
r_{\ell,j}(z)
\coloneqq
\frac{\sigma_{\ell,j}^2(z)}{\sigma^2}.
\]
Since,
\[
\widehat\Sigma_\ell(z)
=
\operatorname{diag}
\left(
\sigma^2+\sigma_{\ell,1}^2(z),
\dots,
\sigma^2+\sigma_{\ell,n_S}^2(z)
\right),
\]
the KL divergence between the true and learned Gaussian transition
distributions therefore satisfies
\begin{align}
&
D_{\mathrm{KL}}
\left(
\mathcal{T}(\cdot\mid z)
\,\middle\|\,
\mathcal{T}_\ell(\cdot\mid z)
\right)
\nonumber\\
&\quad=
\frac12
\sum_{j=1}^{n_S}
\left[
\log(1+r_{\ell,j}(z))
-
\frac{r_{\ell,j}(z)}{1+r_{\ell,j}(z)}
+
\frac{
\left(\mu_{\ell,j}(z)-f_j^*(z)\right)^2
}{
\sigma^2+\sigma_{\ell,j}^2(z)
}
\right].
\label{eq:kl_gp_expansion}
\end{align}
Using \eqref{eq:gp_confidence_event},
\[
\frac{
\left(\mu_{\ell,j}(z)-f_j^*(z)\right)^2
}{
\sigma^2+\sigma_{\ell,j}^2(z)
}
\le
\beta_\ell^2(\alpha)
\frac{r_{\ell,j}(z)}{1+r_{\ell,j}(z)}
\le
\beta_\ell^2(\alpha)r_{\ell,j}(z).
\]
Moreover, for every $r\geq0$,
\[
\log(1+r)-\frac{r}{1+r}
\le
\frac{r^2}{2}.
\]
Hence
\[
D_{\mathrm{KL}}
\left(
\mathcal{T}(\cdot\mid z)
\,\middle\|\,
\mathcal{T}_\ell(\cdot\mid z)
\right)
\le
\sum_{j=1}^{n_S}
\left[
\frac{r_{\ell,j}^2(z)}{4}
+
\frac{\beta_\ell^2(\alpha)r_{\ell,j}(z)}{2}
\right].
\]
By the definition of $q_\ell$,
\[
r_{\ell,j}(z)
\le
\frac{q_\ell}{\sigma^2},
\]
and therefore
\[
D_{\mathrm{KL}}
\left(
\mathcal{T}(\cdot\mid z)
\,\middle\|\,
\mathcal{T}_\ell(\cdot\mid z)
\right)
\le
n_S
\left(
\frac{q_\ell^2}{4\sigma^4}
+
\frac{\beta_\ell^2(\alpha)q_\ell}{2\sigma^2}
\right).
\]
Taking the supremum over $z\in \mathcal{C} \times A$ yields
\[
\eta_\ell^2
\le
n_S
\left(
\frac{q_\ell^2}{4\sigma^4}
+
\frac{\beta_\ell^2(\alpha)q_\ell}{2\sigma^2}
\right).
\]
Since Lemma~\ref{lemma:well_calibrated_gp} implies
\[
\beta_\ell(\alpha)
=
\mathcal O(\sqrt{\Gamma_\ell}),
\]
we obtain
\[
\eta_\ell
=
\mathcal O\!\left(
\sqrt{n_S}
\left[
\frac{q_\ell}{\sigma^2}
+
\frac{\sqrt{\Gamma_\ell q_\ell}}{\sigma}
\right]
\right).
\]
Substitution into \eqref{eq:convergence_decomposition} proves the
high-probability bound.

Thus, every term in the value bound converges whenever
$\Delta_\ell^\rho\to0$ and $\eta_\ell\to0$. In particular, this occurs
under the GP conditions
\[
\Delta_\ell^\rho\to0,
\qquad
q_\ell\to0,
\qquad
\Gamma_\ell q_\ell\to0,
\]
which imply $\eta_\ell\to0$ on the all-time confidence event.

We now show that $\rho^*$ is an $\varepsilon$-Nash equilibrium on $\mathcal C$ in the sense of Proposition~\ref{prop:approximation} under the estimator-agnostic conditions
\[
\Delta_\ell^\rho\to0,
\qquad
\eta_\ell\to0.
\]

By~\cite[Theorem~14]{liese2006divergences}, it follows that
\begin{equation*}
\begin{split}
d_{\mathrm{TV}}
\left(
  (\pi_i,\rho_{-i}^\ell)(\cdot\mid s),
  (\pi_i,\rho_{-i}^*)(\cdot\mid s)
\right)
&\leq
d_{\mathrm{TV}}
\left(
  \rho_{-i}^\ell(\cdot\mid s),
  \rho_{-i}^*(\cdot\mid s)
\right) \\
&\leq
d_{\mathrm{TV}}
\left(
  \rho^\ell(\cdot\mid s),
  \rho^*(\cdot\mid s)
\right) \\
&\leq \Delta_\ell^\rho.
\end{split}
\end{equation*}

Restricting attention to $\mathcal{C}$, set
\[
    M^* = \max_{i \in [n]} \frac{2M_i}{(1 - \gamma)^2},
\]
and let $\Pi$ denote the set of all admissible policies of the original game. It then follows that for $\ell$ large enough (see Proposition~\ref{prop:approximation})
\begin{equation*}
\begin{split}
V_i^{\rho^\ell}(s;\mathcal{T})
&\geq
V_i^{\rho^\ell}(s;\mathcal{T}_\ell)
-
\left|
V_i^{\rho^\ell}(s;\mathcal{T})
-
V_i^{\rho^\ell}(s;\mathcal{T}_\ell)
\right| \\
&\geq
V_i^{(\pi_i,\rho_{-i}^\ell)}(s;\mathcal{T}_\ell)
-\varepsilon
-
\left|
V_i^{\rho^\ell}(s;\mathcal{T})
-
V_i^{\rho^\ell}(s;\mathcal{T}_\ell)
\right| \\
&\geq
V_i^{(\pi_i,\rho_{-i}^\ell)}(s;\mathcal{T})
-\varepsilon \\
&
-
\left|
V_i^{\rho^\ell}(s;\mathcal{T})
-
V_i^{\rho^\ell}(s;\mathcal{T}_\ell)
\right| \\
& 
-
\left|
V_i^{(\pi_i,\rho_{-i}^\ell)}(s;\mathcal{T})
-
V_i^{(\pi_i,\rho_{-i}^\ell)}(s;\mathcal{T}_\ell)
\right| \\
& \geq V_i^{(\pi_i,\rho_{-i}^\ell)}(s;\mathcal{T})
-\varepsilon - 2\max_{i \in [n]}\, \sup_{\pi \in \Pi}
\| V_i^\pi(\cdot ; \mathcal{T}) - V_i^\pi(\cdot ; \mathcal{T}_\ell)\|_{\mathcal{C}}. \\
\end{split}
\end{equation*}
Thus, 
\begin{equation*}
\begin{split}
V_i^{\rho^*}(s; \mathcal{T})
&\geq
V_i^{\rho^\ell}(s; \mathcal{T}) - M^* \Delta_\ell^\rho \qquad (\text{Lemma~\ref{lemma:lemma2}})\\
&\geq
V_i^{\pi_i, \rho_{-i}^\ell}(s; \mathcal{T})
- \varepsilon
- 2\max_{i \in [n]}\, \sup_{\pi \in \Pi}
\| V_i^\pi(\cdot ; \mathcal{T}) - V_i^\pi(\cdot ; \mathcal{T}_\ell)\|_{\mathcal{C}}
- M^*\Delta_\ell^\rho \\
&\geq
V_i^{\pi_i, \rho_{-i}^*}(s; \mathcal{T})
- \varepsilon
- 2\max_{i \in [n]}\, \sup_{\pi \in \Pi}
\| V_i^\pi(\cdot ; \mathcal{T}) - V_i^\pi(\cdot ; \mathcal{T}_\ell)\|_{\mathcal{C}}
- 2M^*\Delta_\ell^\rho.
\end{split}
\end{equation*}
Finally, Lemma~\ref{lemma:lemma1} gives a policy-uniform bound on
\[
    \| V_i^\pi(\cdot ; \mathcal{T}) - V_i^\pi(\cdot ; \mathcal{T}_\ell)\|_{\mathcal{C}}
\]
by a constant multiple of $\eta_\ell$. Hence $\eta_\ell\to0$, together
with $\Delta_\ell^\rho\to0$, yields the desired result upon letting
$\ell\to+\infty$. The GP conditions above imply this same conclusion
through their high-probability guarantee that $\eta_\ell\to0$ if  $q_\ell\to0$,
and $\Gamma_\ell q_\ell\to0$.
\end{proof}

\section{Additional Experimental Details}
\label{app:experimental_details}

This appendix records the reference-solution, kernel-calibration, and evaluation-configuration details underlying the controlled study in Section~\ref{sec:experiments}.

\subsection{Reference solution and evaluation target}
Because the transition dynamics are component-wise (agent $i$'s action does not affect agent $j$'s state) the benchmark admits a coordinate-wise Nash equilibrium policy solution in terms of $n$ identical one-dimensional control problems. This structure provides a high-accuracy reference policy for evaluating the learned policies. It suffices to solve
\[
    R(s, a) \coloneqq -\min \{s^2 + a^2, 4 \}, \quad (s, a) \in \mathbb{R} \times [-1, 1],
\]
whose value under a policy $\pi$ is
\[
    V^\pi(s)
    \coloneqq
    \int_{-1}^1\pi(da\mid s)
    \left[
        R(s,a)
        +\gamma\int_{-\infty}^{\infty}
        V^\pi(s^\prime)\,\mathcal{T}(ds^\prime\mid s,a)
    \right],
\]
and whose optimal value function satisfies
\[
    V^*(s) = \max_{a \in [-1, 1]} \left[ R(s, a) + \gamma \int_{-\infty}^\infty V^*(s^\prime) \mathcal{T}(ds^\prime \mid s, a) \right].
\]

Figure~\ref{fig:value_function_policy} shows the numerical approximation of the optimal value function and policy for $\gamma=0.9$, the discount factor used throughout the experiments. Combining the one-dimensional optimal policies yields a Nash equilibrium of the full benchmark game under the true dynamics $f^*$ and noise standard deviation $\sigma=0.1$: no agent can improve its value through a unilateral deviation. Because the reward is shared, this equilibrium has a natural cooperative interpretation in which every agent minimizes its contribution to the common cost.

\begin{figure}[h]
    \centering
    \includegraphics[width=\linewidth]{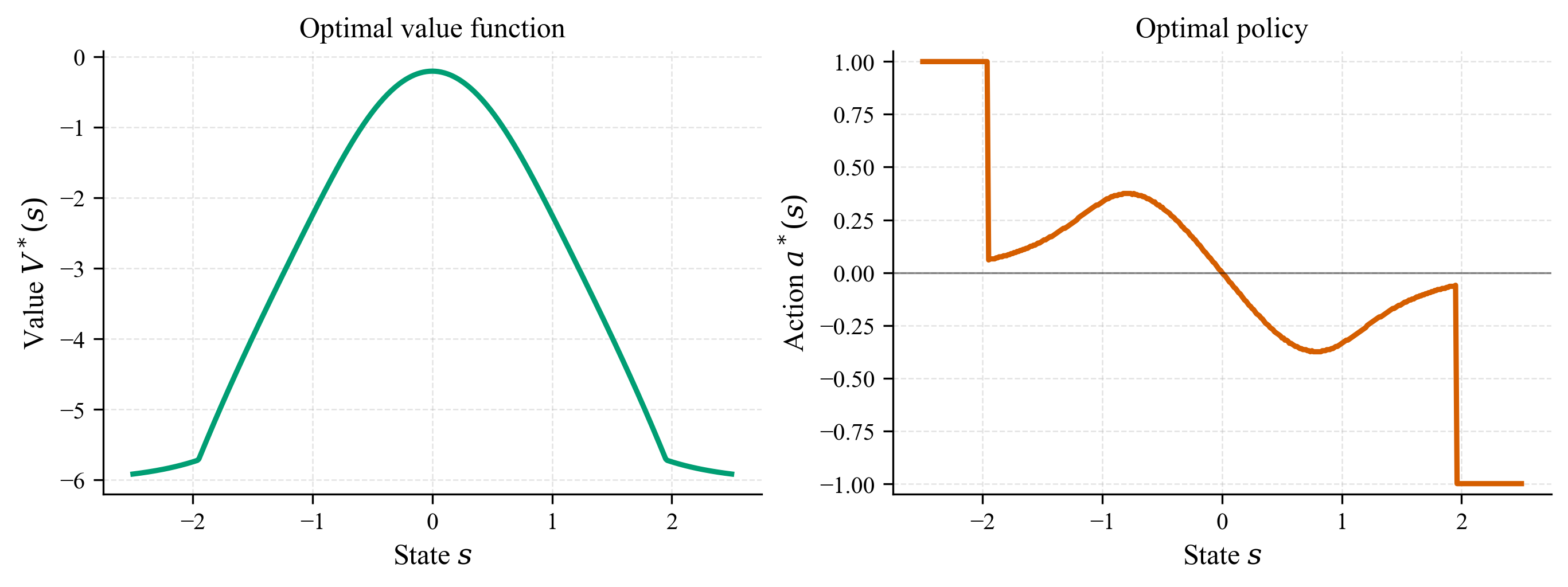}
    \caption{
        Numerical solution of the one-dimensional benchmark stochastic control problem used in our experiments. The left panel shows the optimal value function and the right panel shows the optimal policy obtained via value iteration.
    }
    \label{fig:value_function_policy}
\end{figure}

\begin{remark}[Policy Discontinuity]
    The discontinuity visible in Figure~\ref{fig:value_function_policy} is a direct consequence of the reward structure. When $|s| \ge 2$, the reward saturates at
    \[
        \mathcal{R}(s,a) = -4,
    \]
    regardless of the action taken. Beyond this point, the immediate reward no longer distinguishes between actions, and the optimal decision is determined entirely by future rewards. Consequently, the controller selects the most aggressive admissible action to drive the state toward the origin as rapidly as possible, yielding
    \[
        a^* = -1 \quad \text{for } s \ge 2,
        \qquad
        a^* = 1 \quad \text{for } s \le -2.
    \]
\end{remark}

\subsection{Kernel calibration}
We select the hyperparameters by maximizing the Gaussian marginal likelihood of the observed transition data, using a shared kernel across state coordinates as required by Assumption~\ref{assumption:rkhs_regularity_assumption}.

Let
\[
    \mathcal{D}_{\mathrm{ker}}
    =
    \{(z_r,y_r)\}_{r=1}^{N_{\mathrm{ker}}},
    \qquad
    z_r \coloneqq (s_r,a_r),
\]
where $y_r=s_{r+1}\in\mathbb{R}^{n_S}$ is the observed next state. For each coordinate $j\in\{1,\ldots,n_S\}$, $f_j^*$ denotes the fixed true dynamics, and we use the regression model
\[
    f_j\sim\mathcal{GP}(0,k_\theta),
    \qquad
    y_r^{(j)}=f_j^*(z_r)+u_r^{(j)},
    \qquad
    u_r^{(j)}\sim\mathcal{N}(0,\lambda),
\]
with independence across observations and coordinates. Here, $\lambda$ is the GP likelihood-noise parameter, whereas $\widehat\sigma^2$ is the plug-in variance added to the planner's posterior predictive covariance in Algorithm~\ref{alg:finite_game_rl}. Let
\[
    K_\theta
    =
    \bigl[k_\theta(z_r,z_q)\bigr]_{r,q=1}^{N_{\mathrm{ker}}},
    \qquad
    C_\theta=K_\theta+\lambda I_{N_{\mathrm{ker}}}.
\]
Writing $y^{(j)}=(y_1^{(j)},\ldots,y_{N_{\mathrm{ker}}}^{(j)})^\top$ and $Z=(z_1,\ldots,z_{N_{\mathrm{ker}}})$, the log marginal likelihood for coordinate $j$ is
\[
    \log p(y^{(j)} \mid Z,\theta)
    =
    -\frac{1}{2} (y^{(j)})^\top C_\theta^{-1} y^{(j)}
    -\frac{1}{2}\log \det C_\theta
    -\frac{N_{\mathrm{ker}}}{2}\log(2\pi).
\]
The shared hyperparameters are selected jointly across all output coordinates:
\[
    \theta^\star
    \in
    \arg\min_{\theta \in \Theta}
    \left\{
        \frac{1}{2}\sum_{j=1}^{n_S}
        (y^{(j)})^\top C_\theta^{-1} y^{(j)}
        +
        \frac{n_S}{2}\log \det C_\theta
    \right\},
\]
where constants independent of \(\theta\) have been dropped.

In the experiments, this optimization is performed separately for the RBF and Matérn kernels. For the RBF kernel, we use
\[
    k_\theta(z,z')
    =
    \sigma_f^2
    \exp\left(
        -\frac{\|z-z'\|_2^2}{2\ell_{\mathrm{ker}}^2}
    \right),
\]
with hyperparameters
\[
    \theta = (\sigma_f^2,\ell_{\mathrm{ker}},\lambda).
\]
For the Matérn kernel, we use
\[
    k_\theta(z,z')
    =
    \sigma_f^2
    \frac{2^{1-\nu}}{\Gamma(\nu)}
    \left(
        \frac{\sqrt{2\nu}\|z-z'\|_2}{\ell_{\mathrm{ker}}}
    \right)^\nu
    K_\nu\left(
        \frac{\sqrt{2\nu}\|z-z'\|_2}{\ell_{\mathrm{ker}}}
    \right),
\]
with hyperparameters
\[
    \theta = (\sigma_f^2,\ell_{\mathrm{ker}},\lambda,\nu),
\]
where $K_\nu$ is the modified Bessel function of the second kind. The optimization domain $\Theta$ enforces $\sigma_f^2\le\kappa$ and positivity of the remaining scale parameters. After optimizing the marginal likelihood, we fix $k=k_{\theta^\star}$ for the corresponding GP-MARL run. Thus, both kernels satisfy the bounded-diagonal requirement, while their distinct smoothness structures induce different posterior variances $q_\ell$ and information-gain terms $\Gamma_\ell$ -- the kernel-dependent quantities in Theorem~\ref{theorem:convergence_rate}.

\subsection{Evaluation configurations}
Figure~\ref{fig:optimized_2player} reports results for the two-player setting. Kernel hyperparameters are selected once from a design set sampled uniformly over $[-0.8,0.8]^2\times[-0.5,0.5]^2$; after this calibration step, every posterior and policy update uses the single interaction trajectory collected by GP-MARL. The finite game is constructed on $\mathcal{C}=[-0.8,0.8]^2$ with an active joint action grid over $[-0.5,0.5]^2$ and mesh size $\delta_G=0.05$. The one-dimensional reference solution in Figure~\ref{fig:value_function_policy} uses the full action space $[-1,1]$, inducing a two-agent reference policy on $[-1,1]^2$.

For numerical efficiency, policy and kernel updates are performed every
\[
    H_\ell = \min\{2^\ell,30\}
\]
environment interactions.

We fix the variance estimate at $\widehat\sigma^2=0$, so the finite-game solver uses the learned covariance $\Sigma_\ell$. This choice isolates epistemic GP uncertainty in the numerical study, while the environment continues to evolve with noise variance $\sigma^2=0.1^2$.

To evaluate convergence, let $t_\ell$ denote the environment time at update $\ell$. We define the pointwise value error as
\[
e_\ell
\coloneqq
\left|
V_1^{\rho^\ell}(s_{t_\ell};\widehat{\mathcal{T}}_\ell)
-
V_1^*(s_{t_\ell};\mathcal{T})
\right|,
\]
and report its natural logarithm, $\ln e_\ell$, where $V_1^*(\cdot;\mathcal{T})$ is computed from the reference policy in Figure~\ref{fig:value_function_policy}. This metric is a trajectory-level analogue of the compact-set value discrepancy in Theorem~\ref{theorem:convergence_rate}. All value functions are computed numerically using Gauss--Hermite quadrature.

For the agent-scaling experiments in Figure~\ref{fig:varying_players}, hyperparameters are optimized independently for $j\in\{2,3\}$ agents using design sets sampled uniformly from $[-0.5,0.5]^j\times[-0.5,0.5]^j$. The finite games use mesh size $\delta_G=0.1$, and the transition model and policies are updated every $H=123$ environment interactions. As above, the variance estimate is set to $\widehat\sigma^2=0$.

\clearpage
\bibliography{references}

\end{document}